%% file: neurips_2026.tex
\documentclass{article}

\usepackage[preprint]{neurips_2026}

\usepackage[utf8]{inputenc} 
\usepackage[T1]{fontenc}    
\usepackage{hyperref}       
\usepackage{url}            
\usepackage{booktabs}       
\usepackage{amsfonts}       
\usepackage{nicefrac}       
\usepackage{microtype}      
\usepackage{xcolor}         

\usepackage{algorithmic}
\usepackage{amssymb}
\usepackage{mathtools}
\usepackage{amsthm}
\usepackage[dvipsnames]{xcolor}
\usepackage{colortbl}
\usepackage{soul}
\usepackage{longtable}
\usepackage{multirow}
\usepackage{multicol}
\usepackage{xspace}
\usepackage{pifont}
\usepackage{latexsym}
\usepackage{inconsolata}
\usepackage{arydshln}
\usepackage{enumitem}
\usepackage{wrapfig}
\usepackage{array}
\usepackage{wrapfig}
\usepackage{tcolorbox}
\usepackage{amsmath}
\usepackage{amssymb} 
\usepackage{amsfonts}

\usepackage{amsthm}

\newtheorem{theorem}{Theorem}

\newcommand{\comt}[1]{#1}

\renewcommand{\comt}[1]{}

\newcommand{\up}[1]{\textcolor{OliveGreen}{\footnotesize \ $\uparrow${#1}}}
\newcommand{\downbad}[1]{\textcolor{Maroon}{\footnotesize \ $\downarrow${#1}}}
\newcommand{\down}[1]{\textcolor{OliveGreen}{\footnotesize \ $\downarrow${#1}}}
\newcommand{\upbad}[1]{\textcolor{Maroon}{\footnotesize \ $\uparrow${#1}}}
\newcommand{\basex}[1]{\textcolor{gray!50}{\footnotesize \ $\uparrow${#1}}}
\newcommand{\basexx}[1]{\textcolor{gray!85}{\footnotesize \ $\uparrow${#1}}}
\newcommand{\basexdown}[1]{\textcolor{gray!50}{\footnotesize \ $\downarrow${#1}}}

\newcommand{\cmark}{\textcolor{darkgreen}{\ding{51}}} %
\newcommand{\xmark}{\textcolor{darkred}{\ding{55}}} %
\newcommand{\sexyname}{LaSCD\xspace}

\definecolor{mydarkblue}{rgb}{0,0.08,0.45}
\definecolor{darkgreen}{rgb}{0.0, 0.5, 0.0} 
\definecolor{myblue}{RGB}{235,235,250}
\definecolor{lightpink}{RGB}{222, 235, 225} 
\definecolor{lightblue}{RGB}{230, 235, 245} 
\definecolor{lightgray}{RGB}{240, 240, 240} 
\definecolor{darkgray}{RGB}{220, 220, 220} 

\definecolor{superlightred}{rgb}{0.99, 0.92, 0.92}
\definecolor{darkgreen}{RGB}{50,100,0}
\definecolor{darkred}{RGB}{200, 0, 0}

\title{When Looking Is Not Enough: Visual Attention Structure Reveals Hallucination in MLLMs}

\author{%
  Fanpu Cao$^{1}$, Xin Zou$^{1}$, Xuming Hu$^{1*}$, Hui Xiong$^{1,2}$\thanks{Corresponding author.}\\
  Thrust of Artificial Intelligence, HKUST (Guangzhou), China \\
  Department of Computer Science and Engineering, HKUST, Hong Kong SAR, China \\
  \texttt{fcao628@connect.hkust-gz.edu.cn; xuminghu@hkust-gz.edu.cn; xionghui@ust.hk} \\
}

\begin{document}

\maketitle

\begin{abstract}
Multimodal large language models (MLLMs) have become a key interface for visual reasoning and grounded question answering, yet they remain vulnerable to visual hallucinations, where generated responses contradict image content or mention nonexistent objects. A central challenge is that hallucination is not always caused by a simple lack of visual attention: the model may still assign substantial attention mass to image tokens while internally drifting toward an incorrect answer. In this paper, we show that the high-frequency structure of visual attention, measured by layer-wise Laplacian energy, reveals both the layer where hallucinated preferences emerge and the layer where the ground-truth answer transiently recovers. Building on this finding, we propose \textbf{\sexyname} (\textbf{La}placian-\textbf{S}pectral \textbf{C}ontrastive \textbf{D}ecoding), a training-free decoding strategy that selects informative layers via Laplacian energy and remaps next-token logits in closed form. Experiments on hallucination and general multimodal benchmarks show that \sexyname{} consistently reduces hallucination while preserving general capabilities, highlighting its potential as a faithful decoding paradigm. The code is available at \href{https://github.com/macovaseas/LaSCD}{\texttt{https://github.com/macovaseas/LaSCD}}.
\end{abstract}

\input{sections/introduction}

\input{sections/related_work}

\input{sections/finding}

\input{sections/method}

\input{sections/experiment}

\section{Limitations}
\label{sec:limitations}

\sexyname{} assumes that visual-token order preserves meaningful local structure. This is natural for patch-based encoders with grid-like token layouts, but may be less direct for architectures employing non-grid token aggregation strategies. In addition, our evaluation has not systematically covered reasoning-intensive tasks or substantially larger-scale MLLMs, leaving these settings for future study.

\input{sections/conclusion}

\bibliographystyle{unsrt}
\bibliography{references}

\newpage
\input{sections/appendix}



\end{document}

%% file: sections/introduction.tex
\section{Introduction}
\label{sec:introduction}


Multimodal large language models (MLLMs) have become a central interface for visual reasoning, grounded dialogue, and open-ended visual question answering~\cite{alayrac2022flamingo,liu2024visual,bai2023qwen,liu2024llavanext}. Despite this progress, they still suffer from visual hallucination: generated descriptions or answers can contradict the image, mention nonexistent objects, or make unsupported visual judgments~\cite{huang2024visualhall,zheng2024reefknot,lyu2025realrag}. This failure is especially concerning in safety sensitive applications such as healthcare and autonomous driving~\cite{lin2024hassurvey,ding2024holistic}. Existing mitigation methods address hallucination from different angles, including retrieval augmentation~\cite{qu2024alleviating,yang2024rag}, fine-tuning or preference optimization~\cite{yu2024rlhf,liu2024mitigating}, attention intervention~\cite{opera2024cvpr}, and contrastive decoding (CD)~\cite{vcd2024cvpr,li2023contrastive,shi2024trusting}. While these methods improve faithfulness in many cases, they often treat hallucination as a decoding artifact to be corrected, yet a more fundamental question remains open: \emph{what internal visual signal indicates that the model is about to commit to a hallucinated answer?}

A common intuition is that hallucination occurs because the decoder does not attend enough to the image. This intuition has motivated methods that increase, reallocate, or intervene on visual attention during decoding~\cite{opera2024cvpr,liu2024paying,zoulook}. However, visual attention magnitude alone may not answer the key diagnostic question (Sec. \ref{sec:finding1}). If the model can assign substantial attention to visual tokens while drifting toward an incorrect answer, then the problem is not simply whether the model ``looks'' at the image, but whether the structure of that visual attention supports faithful grounding. This gap has direct consequences for both analysis and mitigation: magnitude-based signals can activate at the wrong layer where the model is already visually engaged but internally misaligned.

\begin{figure}
    \centering
    \includegraphics[width=1.0\linewidth]{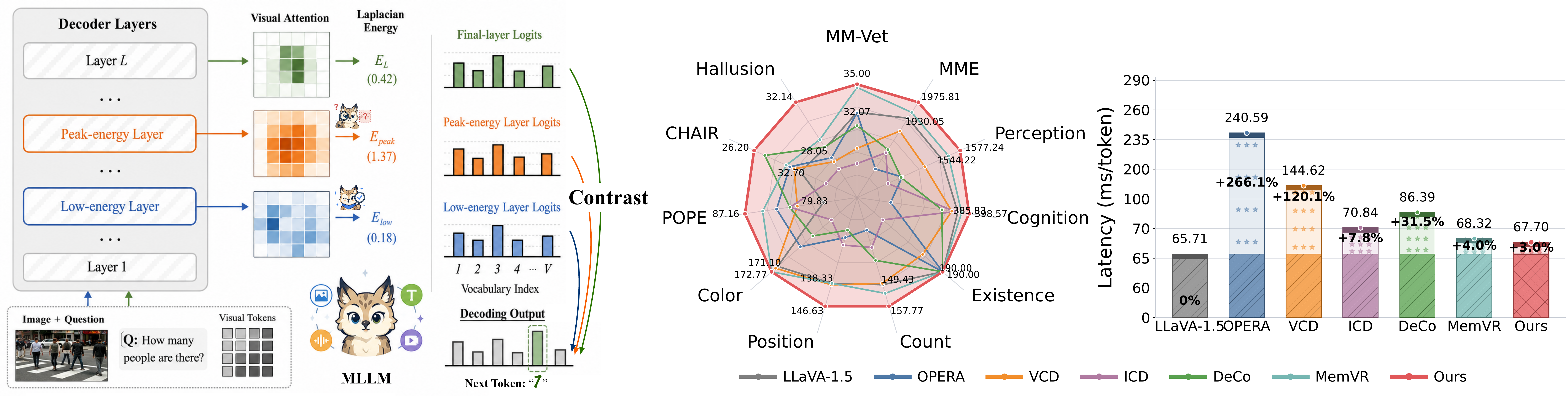}
    \caption{Overview of LaSCD \& Performance and latency comparison of SOTA. At each decoding step, LaSCD computes the discrete Laplacian energy of the last-query visual self-attention across candidate layers. It then selects the maximum- and minimum-energy layers to apply a lightweight contrastive logit remap, down-weighting the peak layer's predictions relative to the final head.}
    \label{fig:radar}
\end{figure}

In this paper, we present an empirical analysis showing that visual-attention \emph{magnitude} is a suboptimal proxy for hallucination, whereas the \emph{high-frequency structure} of visual attention provides a sharper layer-wise signal. Specifically, we compute a Laplacian energy on the last-query attention slice over visual tokens. Our analysis reveals two complementary phenomena (Fig. \ref{fig:case}, \ref{fig:case2}). First, hallucinated outputs can emerge when the visual-attention mass is still high, directly contradicting the view that low visual attention is the primary trigger. Second, the layer with peak energy often marks the onset of the hallucinated preference, while the layer with minimum energy can briefly surface the ground-truth answer before later layers overwrite it. These findings suggest that hallucination is better exposed by the \emph{spatial variation} of visual attention than by its total mass.

Motivated by this insight, we propose \textbf{\sexyname} (\textbf{La}placian-\textbf{S}pectral \textbf{C}ontrastive \textbf{D}ecoding), a training-free decoding strategy for MLLMs. At each decoding step, \sexyname{} scores candidate layers using $E^{(\ell)}$, selects a hallucination-associated peak layer and a ground-truth-associated low-energy layer, and then remaps next-token logits in closed form. The core operation contrasts the final logits against the peak-layer logits, suppressing tokens that are already over-supported at the layer where hallucination starts to form. An optional correction can further use the low-energy layer when validation suggests a benefit. Importantly, \sexyname uses one regular autoregressive forward and only adds lightweight LM-head evaluations and attention-side energy computation, avoiding the second full forward typical of two-view CD pipelines.

We evaluate \sexyname on standard hallucination and general multimodal benchmarks. Across these evaluations, \sexyname achieves state-of-the-art performance: reducing hallucination while preserving general capabilities where most existing methods fail. These results indicate the effectiveness of the proposed structural signal.

Our contributions are summarized as follows:
\begin{itemize}[leftmargin=*]
    \item We identify a layer-wise empirical phenomenon in MLLMs: visual-attention mass can remain high during hallucination, while Laplacian energy over the visual-attention structure reveals both hallucination onset and transient ground-truth recovery.
    \item We introduce \sexyname{}, a training-free decoding method that converts this finding into a simple closed-form logit remapping, contrasting the final prediction against the energy-peak layer and optionally correcting with the low-energy layer.
    \item We validate \sexyname{} across hallucination and general multimodal benchmarks, showing improved faithfulness with modest inference overhead and ablation evidence that supports the proposed visual-attention structure signal.
\end{itemize}

%% file: sections/related_work.tex
\section{Related Work}
\label{sec:related_work}

\noindent\textbf{Multimodal LMs and hallucination.}
Multimodal large language models (MLLMs) have advanced rapidly, building on vision--language pretraining and instruction tuning \cite{alayrac2022flamingo,liu2024visual,bai2023qwen,liu2024llavanext}. Despite strong task performance, they remain prone to \emph{visual hallucination}, producing text that is inconsistent with the image, which has motivated a growing line of analysis and mitigation \cite{yin2023survey,bai2024hallucination}.

\noindent\textbf{Mitigation strategies.}
A first family of methods improves alignment or rewrites model outputs with extra supervision: hallucination-oriented fine-tuning or detection datasets, learned revisors, or preference optimization \cite{gunjal2024detecting,zhouanalyzing,yu2024rlhf}, often at nontrivial data and compute cost.
A complementary line stays \textit{training-free} at test time. \textbf{Attention-intervention} methods directly manipulate attention maps or apply decoding-time penalties~\cite{opera2024cvpr,zhang2024seeing,xing2024mitigating,liu2024paying, iTaD, yin2026dynamic}, which can be effective but may add nontrivial per-step cost when the implementation expands attention computation or re-decoding.
A second \textbf{contrastive-decoding (CD)} family compares logits under two conditions to down-weight spurious continuations, including instruction-style or noised-text variants in language models \cite{chuang2023dola} and visual-input counterparts in MLLMs such as VCD/ICD \cite{vcd2024cvpr,wang2024mitigating,chenhalc,neo2024vord,ASCD,song2026seeing,ren2026nolan}. These approaches are attractive because they do not retrain the backbone, but (i) their contrast can be sensitive to the chosen perturbation, occasionally injecting noise into the next-token distribution, and (ii) \emph{two-view} realizations that require a second end-to-end forward to obtain a second conditional distribution can substantially inflate wall-clock latency relative to a single autoregressive pass. Recently MemVR~\cite{zoulook} and AIR~\cite{zhu2026look} represent an emerging paradigm of \textbf{inference-time visual reinforcement}, which addresses the "visual amnesia" or attention shift towards textual tokens in deeper layers of MLLMs by explicitly re-integrating visual evidence into the language decoder. Specifically, MemVR dynamically detects high-uncertainty layers by entropy and AIR employs \textit{Optimal Transport} to measure the alignment between hidden states and patch embeddings. However, these methods typically rely on coarse-grained statistical metrics to approximate the model’s hallucination state. Such aggregate statistics discard the layer-wise structural dynamics, which are critical for localizing hallucination onset. They overlook the token-level evolutionary trajectories and hierarchical semantic dependencies that are critical for diagnosing and mitigating hallucinations at a deeper mechanistic level.

\noindent\textbf{Positioning of \sexyname.}
\sexyname{} is training-free and operates entirely at decoding time. Unlike magnitude- and entropy-only heuristics, it scores \emph{layer-wise} structure using a high-frequency signal, and then applies a \emph{closed-form} logit remapping. It is related to the CD \emph{principle}: we contrast the final head output against an intermediate one, but the contrasted layer is \emph{selected} on a \emph{single} autoregressive forward, avoiding the two-forward VCD-style pipelines. Relative to layer-wise decoders like DoLa~\cite{chuang2023dola} and DeCo~\cite{wang2025mllm}, our focus is how to find the primary carrier of the hallucination signal. Table~\ref{tab:method_positioning} illustrates the differences of \sexyname{} compared to recent representative SOTA approaches.

%% file: sections/finding.tex
\section{Empirical Analysis}

\subsection{Preliminaries}
\label{sec:prelim}

\textbf{Multimodal large language models.} MLLMs pair a pretrained vision encoder with a decoder-only language model through a lightweight projector. Given an image or video $\mathcal{I}$ and a text prompt $\mathcal{P}$, the vision encoder maps $\mathcal{I}$ to visual tokens $\mathbf{V}\in\mathbb{R}^{N_v\times d}$, which are projected into the language embedding space and concatenated with text tokens $\mathbf{T}\in\mathbb{R}^{N_t\times d}$ to form a single input sequence $\mathbf{X}=[\mathbf{T}_{\text{pre}},\mathbf{V},\mathbf{T}_{\text{post}}]$ of length $N=N_v+N_t$. We denote by $\mathcal{I}_v\subset\{1,\ldots,N_v\}$ the index set of visual tokens in $\mathbf{X}$. At each decoding step, an $L$-layer Transformer decoder consumes $\mathbf{X}$ together with the previously generated tokens. At layer $l$, the last-position (i.e.\ currently generated) query attends to all $N$ keys; we write $\mathbf{A}^{(l,h)}\in\mathbb{R}^{N}$ for the attention distribution of head $h\in\{1,\ldots,H\}$ over key positions, and obtain the next-token logits from the final-layer hidden state via an unembedding head, $\mathbf{z}^{(L)}=\mathrm{LMHead}(\mathbf{h}^{(L)})$, from which the next token is sampled. 

\textbf{Multi-modal Hallucination.} A common narrative in MLLM hallucination is that models hallucinate when they under-attend to the visual evidence, and that increasing or reallocating computation onto visual tokens should therefore reduce hallucination~\cite{opera2024cvpr,liu2024paying,zoulook}. This view implicitly treats the visual slice of the last-query attention as the operational signal. Concretely, we denote the head-averaged attention from the current token to all visual keys $\boldsymbol{\alpha}^{(l)}_{\text{vis}}$, together with its scalar $\ell_1$ mass, by:
\begin{equation}
\label{eq:vis-attn}
\boldsymbol{\alpha}^{(l)}_{\text{vis}}
\;\triangleq\;
\frac{1}{H}\sum_{h=1}^{H}\mathbf{A}^{(l,h)}\!\left[\mathcal{I}_v\right]
\;\in\;
\mathbb{R}^{N_v},
\qquad
m^{(l)}
\;\triangleq\;
\big\|\boldsymbol{\alpha}^{(l)}_{\text{vis}}\big\|_1
\;=\;
\sum_{i\in\mathcal{I}_v}\alpha^{(l)}_{\text{vis},i}
\;\in\;
\mathbb{R}.
\end{equation}
Many prior methods operationalize $\boldsymbol{\alpha}^{(l)}_{\text{vis}}$ in two ways: (i) as a \emph{trigger} that decides \emph{when} to activate a mitigation step (e.g., when $m^{(l)}$ appears small), and/or (ii) as an \emph{edit target}---reallocating, suppressing, or rewiring attention over keys during decoding so that subsequent hidden states and logits are steered toward visually grounded predictions. The empirical analysis below revisits whether such insufficient visual attention is a reliable indicator of hallucination in practice.

\subsection{Is Visual-Attention Magnitude a Reliable Hallucination Signal?}
\label{sec:finding1}

The narrative in Sec.~\ref{sec:prelim} makes a sharp, testable prediction: at the layer that commits to the next token, hallucinated outputs should coincide with \emph{low} visual-attention mass $m^{(l)}$, while confidently correct outputs should coincide with \emph{high} $m^{(l)}$. In this subsection, we revisit this prediction through a representative case (Figure \ref{fig:case}) and quantitative analysis (Figure \ref{fig:case2}). Two observations drawn from this section highlight the limitations of relying solely on the magnitude-based account.

\begin{figure}
    \centering
    \includegraphics[width=1.0\linewidth]{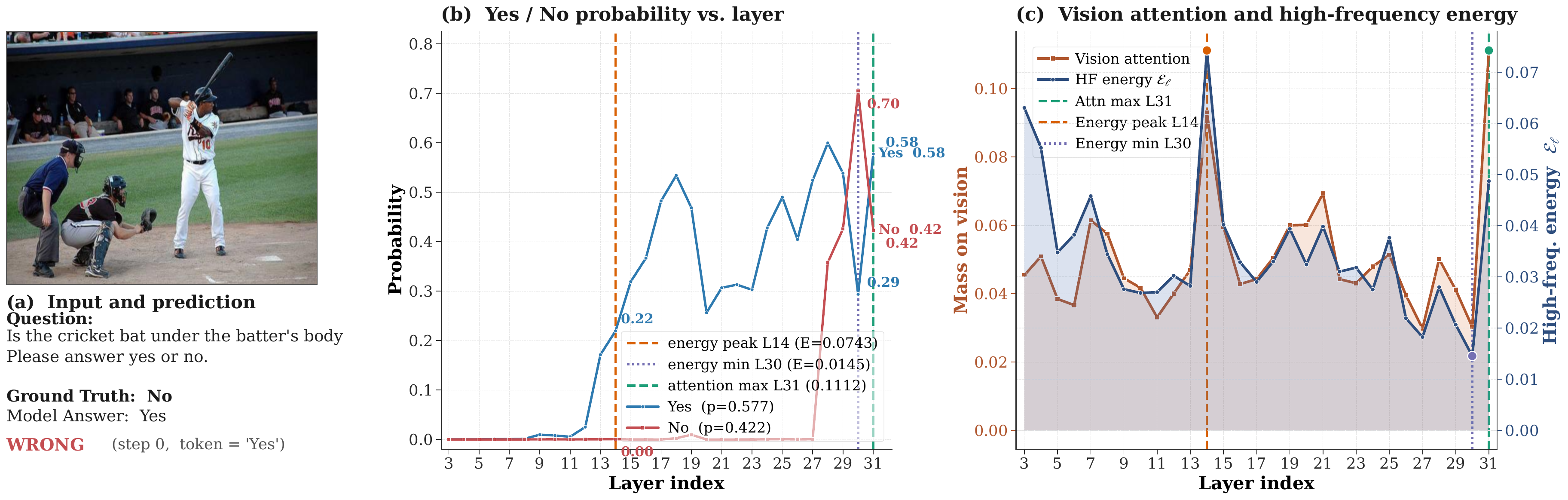}
    \caption{\textbf{Three views of the same decoding step under confident hallucination.}
    LLaVA-1.5 is asked \textit{``Is the cricket bat under the batter's body?''}; ground truth is \emph{No}, but the model answers \emph{Yes} with $p(\text{Yes})\!=\!0.58$. Using the \emph{same} attention patterns, for each layer we plot:
    \textbf{(middle)} top-$2$ probabilities of \emph{Yes} and \emph{No} from the hidden state at that layer;
    \textbf{(right)} visual attention mass $m^{(l)}$ from Eq.~\ref{eq:vis-attn} and high-frequency energy $E^{(l)}$ of $\boldsymbol{\alpha}^{(l)}_{\text{vis}}$ from Eq.~\ref{eq:energy}.
    The layer that maximizes $E^{(l)}$ tends to be the first at which the hallucinated token reaches top-$1$ and begins a monotonic rise in probability; the layer that minimizes $E^{(l)}$ tends to favor the ground-truth token and is often the only layer at which the correct logit exceeds its hallucinated competitor. More cases is available at Appendix~\ref{apx:case_study}.}
    \label{fig:case}
\end{figure}

\textbf{Confident hallucination coincides with high visual attention magnitude.}
Two observations drawn from Fig.~\ref{fig:case} jointly falsify the magnitude-based account for this instance. First, the hallucination does not arise as a last-layer artifact: the logit of \emph{Yes} surpasses that of \emph{No} in the intermediate layers and reaches a large margin well prior to the final output, indicating that the model's commitment to the erroneous response is established and sustained across a substantial portion of the forward pass. Second, $m^{(l)}$ remains substantially high across the shallow and intermediate layers throughout the onset of hallucination, and ultimately attains its global maximum precisely at the layer responsible for emitting the hallucinated token.

Taken together, these observations reveal an important suboptimality in the current paradigm: the model maintains robust visual engagement during the onset of the hallucination and allocates maximal $m^{(l)}$ at the very layer where its confidence in the incorrect answer is highest. This result stands in direct opposition to the prediction of the magnitude hypothesis, which implicitly assumes that greater visual engagement should correspond to more faithful grounding.

\begin{center}
\vspace{-0.7em}
\begin{tcolorbox}[
    colback=violet!8!white,
    colframe=violet!70!black,
    width=1.0\textwidth, 
    boxrule=0.5mm, 
    arc=5mm, 
    auto outer arc,
    left=0.5mm,
    right=0.5mm
]
\centering
\textbf{\textit{Observation 1. Visual attention magnitude often remains high during hallucination onset, making it a suboptimal proxy for exact error localization.}}
\end{tcolorbox}
\vspace{-0.7em}
\end{center}

\subsection{Energy-Peak Layer Pre-Commits the Final Answer}
\label{sec:finding2}
While attention magnitude may lack the fine-grained resolution to isolate these internal dynamics, it is natural to investigate alternative properties of the spatial distribution. Beyond \emph{how much} mass is allocated to the visual segment, we examine \emph{how} this mass is spatially distributed across the $N_v$ visual tokens, specifically whether the pattern is smooth or exhibits sharp, position-wise variations.

Recent findings in unimodal generation~\cite{qi2026detecting} indicate that erroneous outputs often co-occur with pronounced local fluctuations in attention, such as sharp peaks and abrupt shifts that global statistics fail to capture. Inspired by this insight, we investigate whether analogous \emph{spatial} structures within the visual attention slice of a multimodal decoder can reveal the model's layer-wise commitment during the forward pass. Accordingly, we introduce a metric that explicitly quantifies these local variations. We summarize this behavior using a single layer-wise scalar: the \emph{high-frequency energy} of the visual attention. Concretely, let $\boldsymbol{\alpha}^{(l,h)}_{\text{vis}} \triangleq \mathbf{A}^{(l,h)}\!\left[\mathcal{I}_v\right] \in \mathbb{R}^{N_v}$ denote the per-head visual slice of the last-query attention. Assuming the vision encoder produces tokens on a square grid such that $N_v = G_v^2$, we reshape this per-head slice into a 2D map:

\begin{equation}
\label{eq:reshape}
\mathbf{A}^{(l,h)}_{\text{2D}} \;\triangleq\; \mathrm{reshape}_{G_v\times G_v}\!\big(\boldsymbol{\alpha}^{(l,h)}_{\text{vis}}\big) \;\in\; \mathbb{R}^{G_v\times G_v}.
\end{equation}

The high-frequency energy is then defined as the head-averaged Frobenius norm of the 2-D discrete Laplacian response:
\begin{equation}
\label{eq:energy}
E^{(l)} \;=\; \frac{1}{H}\sum_{h=1}^{H} \Big\| \mathcal{K}_{\!\Delta} \,\ast\, \mathbf{A}^{(l,h)}_{\text{2D}} \Big\|_F,
\end{equation}

where $\ast$ denotes 2D convolution with zero padding and $\mathcal{K}_{\!\Delta}$ denotes the Laplace operator. For non-square layouts we apply the one-dimensional second-difference operator $\big[\mathcal{K}_{\!\Delta}^{\,\text{1D}}\,\mathbf{x}\big]_i = x_{i+1} - 2\,x_i + x_{i-1}$ along the token axis. We also evaluated Sobel-style first differences and Laplacian-of-Gaussian filters; all choices reproduced the qualitative behavior in Fig.~\ref{fig:case}.

\begin{wrapfigure}{R}{0.5\textwidth}
    \centering
    \includegraphics[width=\linewidth]{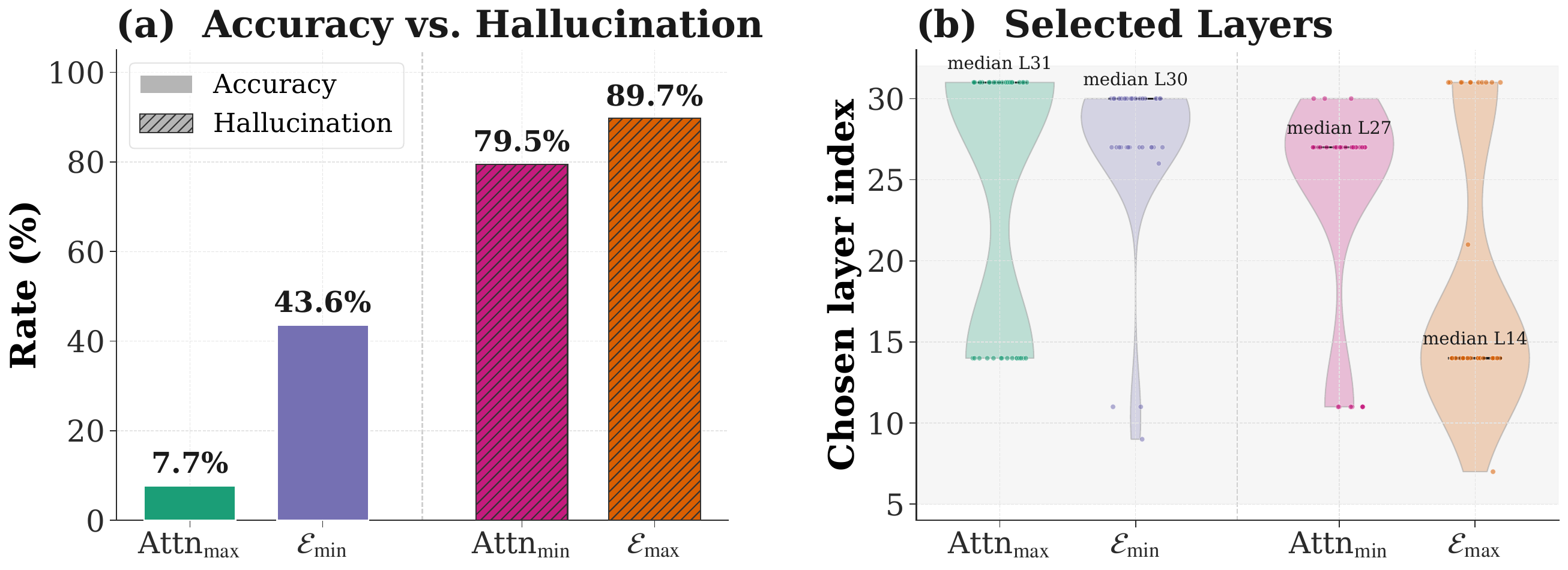}
    \caption{Quantitative analysis on the MME-Hall subset when MLLM exhibits hallucination. The layer with the largest attention mass attains only 7.7\% answer accuracy, whereas the layer with the minimum high-frequency energy recovers the correct response in 43\% of cases. Moreover, the layer with the maximum high-frequency energy shows a 10 percentage-point higher hallucination rate than the layer with the low visual attention.}
    \label{fig:case2}
\end{wrapfigure}

\textbf{Energy-peak layer locates hallucination; energy-minimum layer recovers GT.}
Figure~\ref{fig:case} exhibits two consistent cross-panel patterns. First, $E^{(l)}$ attains its maximum at an intermediate layer at which the hallucinated token \emph{Yes} has already begun to rise; from that layer onward, $p(\text{Yes})$ increases monotonically toward the final output. Second, $E^{(l)}$ reaches its minimum at a deeper layer, and at that layer alone the correct token \emph{No} regains the top-$1$ rank over \emph{Yes}. Thus two summaries of the \emph{same} visual attention carry contrasting signals: high mass coincides with strong visual engagement even when the model is wrong, whereas high-frequency structure marks an earlier internal pre-commitment to the hallucinated token and a late, transient preference for the ground-truth answer that deeper layers subsequently override.

\begin{center}
\vspace{-0.7em}
\begin{tcolorbox}[
    colback=violet!8!white,
    colframe=violet!70!black,
    width=1.0\textwidth, 
    boxrule=0.5mm, 
    arc=5mm, 
    auto outer arc,
    left=0.5mm,
    right=0.5mm
]
\centering
\textbf{\textit{Observation 2. $E^{(l)}$ effectively localizes internal commitments missed by magnitude, where its peak marks the hallucination and its minimum reveals the ground truth.}}
\end{tcolorbox}
\vspace{-0.7em}
\end{center}

%% file: sections/method.tex
\section{Methodology}
\label{sec:method}

\subsection{Overview}
\label{sec:method-bridge}

Sec.~\ref{sec:finding1}--\ref{sec:finding2} argue that visual-attention \emph{magnitude} can be a suboptimal proxy for localizing hallucination, whereas the Laplacian energy $E^{(\ell)}$ tracks the hallucination and the ground truth.

Motivated by these observations, we propose \textbf{\sexyname} (\textbf{La}placian-\textbf{S}pectral \textbf{C}ontrastive \textbf{D}ecoding), a training-free decoding strategy. \sexyname scores layers with $E^{(\ell)}$ (Sec.~\ref{sec:method-energy}), then remaps next-token logits: (i)~contrastive cancellation of the peak-layer preference relative to $\mathbf{z}^{(L)}$ (Sec.~\ref{sec:method-alpha}); (ii)~a confidence-guided correction from $\ell_{\mathrm{gt}}$ (Sec.~\ref{sec:method-beta}).
Section~\ref{sec:method-summary} gives the closed-form logits.

\subsection{Layer Selection}
\label{sec:method-energy}

We first evaluated $E^{(\ell)}$ on a chosen set of decoder layers $\mathcal{L}$.
At each step we take the last-query row of $\mathbf{A}^{(\ell,h)}$, restrict to visual keys when analyzing vision tokens, map the slice to a grid when applicable, and compute $E^{(\ell)}$ via the Laplacian response (Eq.~\ref{eq:energy}); alternative spatial layouts follow the same principle as in the empirical section.
We then select
\begin{equation}
\label{eq:peak-layers}
\ell_{\mathrm{peak}} \;\in\; \arg\max_{\ell\in\mathcal{L}} E^{(\ell)},
\qquad
\ell_{\mathrm{gt}} \;\in\; \arg\min_{\ell\in\mathcal{L}} E^{(\ell)},
\end{equation}
during the forward process. $\ell_{\mathrm{peak}}$ targets the onset of high-curvature, hallucination-associated attention structure, while $\ell_{\mathrm{gt}}$ targets a depth that often briefly aligns with the correct token.

\subsection{Contrastive Logits (\sexyname-$\alpha$)}
\label{sec:method-alpha}

Intermediate logits at $\ell_{\mathrm{peak}}$ tend to lock in the eventual error with a large margin, which the final layer inherits; we therefore contrast the final logits against that depth while keeping $\mathbf{z}^{(L)}$ as the anchor:
\begin{equation}
\label{eq:lascd-alpha}
\tilde{\mathbf{z}}
\;=\;
(1+\alpha)\,\mathbf{z}^{(L)}
\;-\;
\alpha\,\mathbf{z}^{(\ell_{\mathrm{peak}})},
\qquad
\alpha \ge 0,
\end{equation}
where $\mathbf{z}^{(\ell_{\mathrm{peak}})}$ is the LM-head output at $\ell_{\mathrm{peak}}$.
When $\mathbf{z}^{(\ell_{\mathrm{peak}})}\approx \mathbf{z}^{(L)}$, Eq.~\ref{eq:lascd-alpha} leaves benign trajectories largely unchanged.
We apply top-$K$ masking on the \emph{composed} logits, following standard contrastive-decoding practice.

\subsection{Decoding Correction (\sexyname-$\beta$)}
\label{sec:method-beta}

The minimum-energy depth $\ell_{\mathrm{gt}}$ can carry a complementary preference aligned with the ground-truth token (Sec.~\ref{sec:finding2}).
To add a small correction $\propto \mathbf{z}^{(\ell_{\mathrm{gt}})}$ without letting fixed scaling amplify mismatched logit scales across depth, we gate the strength by the peak softmax mass of that layer's distribution:
\begin{equation}
\label{eq:gt-conf}
\pi_{\max}^{(\ell)} \;\triangleq\; \max_{v\in\mathcal{V}} \big[\mathrm{softmax}(\mathbf{z}^{(\ell)})\big]_v,
\qquad
\beta_{\mathrm{eff}} \;=\; \beta\cdot\pi_{\max}^{(\ell_{\mathrm{gt}})},
\quad
\beta \ge 0.
\end{equation}

\subsection{Closed-Form of \sexyname}
\label{sec:method-summary}

\sexyname-$\alpha$ and \sexyname-$\beta$ combine additively:
\begin{equation}
\label{eq:lascd-full}
\tilde{\mathbf{z}}
\;=\;
(1+\alpha)\,\mathbf{z}^{(L)}
\;-\;
\alpha\,\mathbf{z}^{(\ell_{\mathrm{peak}})}
\;+\;
\beta_{\mathrm{eff}}\,\mathbf{z}^{(\ell_{\mathrm{gt}})}.
\end{equation}

In practice, we found that contrastive decoding with $\alpha$ alone (Eq.~\ref{eq:lascd-alpha}) is already sufficient across several benchmarks. Unless otherwise stated, \textbf{all main experiments and reported results use $\beta{=}0$}. We enable $\beta{>}0$ only when validation on a specific setting suggests a benefit.

\textbf{Complexity Analysis.}
At each autoregressive step we use \emph{one} forward~$F$ to compute $E^{(\ell)}$ on~$\mathcal{L}$, pick $\ell_{\mathrm{peak}}$ and~$\ell_{\mathrm{gt}}$, and form $\tilde{\mathbf{z}}$ in Eq.~\ref{eq:lascd-full}; the overhead beyond~$F$ is $\mathcal{O}(|\mathcal{L}|\,H\,N_v)$ for the Laplacian energy and $\mathcal{O}(|\mathcal{V}|)$ for the logit mix. By contrast, \emph{existing} methods in Table~\ref{tab:la-complexity} can incur (i) a \emph{second} full forward ($2F$) when the scheme requires a second pass through the stack, (ii) a \emph{per-probed-layer} cost that multiplies~$|\mathcal{L}|$ with a full $|\mathcal{V}|$-dimensional logit/entropy readout, or (iii) a post-$F$ re-weighting that scales with model hidden dimension as~$\mathcal{O}(N_v d)$. \sexyname avoids the duplicated backbone pass, keeps a \emph{single}~$F$, and confines the nontrivial work to the attention--vision slice in $\mathcal{O}(|\mathcal{L}|\,H\,N_v)$ with only a \emph{bounded} $\mathcal{O}(d|\mathcal{V}|)$ logit merge in Eq.~\ref{eq:lascd-full}, which sidesteps the $d|\mathcal{L}||\mathcal{V}|$ and $N_v d$ patterns above and is typically more favorable in large-vocab settings.

\begin{table}[htp]
\centering
\caption{Complexity comparison of different methods; $F$: one autoregressive forward, $H$: number of attention heads, $N_v$: visual positions, $d$: hidden dim, $\mathcal{L}$: layer subsets, and $|\mathcal{V}|$: the vocabulary size.}
\label{tab:la-complexity}
\resizebox{1.0 \linewidth}{!}{
\begin{tabular}{ccccc}
\toprule
 & VCD & DeCo & MemVR & \sexyname (ours) \\
\midrule
Complexity
  & $\mathcal{O}(2F+d|\mathcal{V}|)$
  & $\mathcal{O}(F+d|\mathcal{L}|\,|\mathcal{V}|)$
  & $\mathcal{O}(F+d|\mathcal{L}|\,|\mathcal{V}|+N_v d)$
  & $\mathcal{O}(F+|\mathcal{L}|\,H\,N_v+d|\mathcal{V}|)$ \\
\bottomrule
\end{tabular}
}
\end{table}

\textbf{Theoretical Analysis.}
We provide a lightweight toy geometric justification for using Laplacian energy as a
layer-wise signal. For convenience, we stated a restricted setting and analyzed the \emph{normalized} visual-attention shape $\boldsymbol{\alpha}^{(\ell)}$ with $\mathbf{1}^{\top}\boldsymbol{\alpha}^{(\ell)}=1$, so the total visual-attention mass is fixed and only the spatial structure is studied. This toy analysis explains why spatially fragmented visual attention should carry larger high-frequency energy. The full proof is in Appendix~\ref{apx:theory}.

Consider where visual attention is normalized over visual tokens, $\boldsymbol{\alpha}\in\mathbb{R}_{\ge0}^{N_v}$ and
$\mathbf{1}^{\top}\boldsymbol{\alpha}=1$, and where grounding failure is modeled locally on the probability simplex as
$\boldsymbol{\alpha}_{\mathrm{hall}}=\boldsymbol{\pi}+\boldsymbol{\eta}$ with $\mathbf{1}^{\top}\boldsymbol{\eta}=0$, $\mathbb{E}[\boldsymbol{\eta}]=0$, and non-trivial
high-pass variance
$\operatorname{Tr}(\Delta^\top\Delta\,\mathrm{Cov}(\boldsymbol{\eta}))>0$.

\begin{theorem}
\label{thm:normalized-energy-summary}
Under the fixed-mass condition, coherent region-localized attention has bounded Laplacian energy, while fragmented or spatially incoherent attention increases the expected
squared Laplacian energy relative to the unperturbed baseline:
\begin{equation}
    \mathbb{E}\!\left[E(\boldsymbol{\alpha}_{\mathrm{hall}})^2\right]
    >
    E(\boldsymbol{\pi})^2.
    \label{eq:energy-amplification-sum}
\end{equation}
\end{theorem}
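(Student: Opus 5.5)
The plan is to treat the Laplacian energy as a quadratic form and expand it around the baseline $\boldsymbol{\pi}$. Write the convolution with the zero-padded kernel $\mathcal{K}_{\!\Delta}$ as a fixed linear map $\Delta\in\mathbb{R}^{N_v\times N_v}$ acting on the vectorized attention, either the 2-D five-point stencil on the $G_v\times G_v$ grid or the 1-D second-difference matrix. Then, for a single (head-averaged) map, $E(\boldsymbol{\alpha})=\|\Delta\boldsymbol{\alpha}\|_2$. Hence $E(\boldsymbol{\alpha})^2=\boldsymbol{\alpha}^\top\Delta^\top\Delta\,\boldsymbol{\alpha}$, and the squared energy is a positive semidefinite quadratic form in $\boldsymbol{\alpha}$. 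This reduction is what makes the expectation tractable; that is why the statement is phrased for $E^2$ rather than $E$.

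Substitute $\boldsymbol{\alpha}_{\mathrm{hall}}=\boldsymbol{\pi}+\boldsymbol{\eta}$ and expand:
\[
E(\boldsymbol{\alpha}_{\mathrm{hall}})^2=E(\boldsymbol{\pi})^2+2\,\boldsymbol{\pi}^\top\Delta^\top\Delta\,\boldsymbol{\eta}+\boldsymbol{\eta}^\top\Delta^\top\Delta\,\boldsymbol{\eta}.
\]
Take expectations. The cross term is linear in $\boldsymbol{\eta}$, so it vanishes by $\mathbb{E}[\boldsymbol{\eta}]=0$. For the quadratic term, use the trace trick together with $\mathbb{E}[\boldsymbol{\eta}\boldsymbol{\eta}^\top]=\mathrm{Cov}(\boldsymbol{\eta})$ (valid since the mean is zero):
\[
\mathbb{E}[\boldsymbol{\eta}^\top\Delta^\top\Delta\boldsymbol{\eta}]=\operatorname{Tr}(\Delta^\top\Delta\,\mathrm{Cov}(\boldsymbol{\eta})).
\]
This gives
\[
\mathbb{E}[E(\boldsymbol{\alpha}_{\mathrm{hall}})^2]=E(\boldsymbol{\pi})^2+\operatorname{Tr}(\Delta^\top\Delta\,\mathrm{Cov}(\boldsymbol{\eta})),
\]
and the strict inequality \eqref{eq:energy-amplification-sum} follows directly from the stated high-pass variance assumption.

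The constraint $\mathbf{1}^\top\boldsymbol{\eta}=0$ is not needed for the inequality itself. I would use it only to interpret the result: it keeps $\boldsymbol{\alpha}_{\mathrm{hall}}$ on the simplex, so the increase is attributable to redistribution of a fixed mass rather than to added mass. In particular, $\mathrm{Cov}(\boldsymbol{\eta})$ then annihilates $\mathbf{1}$, so the trace only measures variance in mass-preserving directions.

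For the ``coherent attention has bounded energy'' half, I would give a simple bound. Since $\|\Delta\|_2\le 8$ for the 2-D stencil (4 in 1-D, by Gershgorin), $E(\boldsymbol{\pi})\le\|\Delta\|_2\|\boldsymbol{\pi}\|_2\le\|\Delta\|_2\|\boldsymbol{\pi}\|_1=\|\Delta\|_2$. For smooth, region-localized $\boldsymbol{\pi}$, the interior Laplacian terms are second differences and are small, so the energy is dominated by the region's boundary.

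To connect with head-averaged $E^{(l)}$, I would apply the argument per head. By Jensen's inequality, the averaged squared norm upper-bounds the square of the average, so I would state the theorem for the per-head or head-averaged map, where the quadratic identity is exact.

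The main obstacle is not the algebra. It is ensuring the positivity hypothesis is meaningful, namely that $\mathrm{Cov}(\boldsymbol{\eta})$ is not supported on $\ker\Delta$. With zero padding, $\Delta$ is nonsingular (it is a Dirichlet Laplacian), so any nonzero covariance already gives a strictly positive trace. I would note this remark explicitly to show the assumption is mild.
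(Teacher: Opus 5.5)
Your proof is correct and follows the paper's argument: expand $\mathbb{E}\|\Delta(\boldsymbol{\pi}+\boldsymbol{\eta})\|_2^2$, drop the cross term using $\mathbb{E}[\boldsymbol{\eta}]=\mathbf{0}$, identify the remainder as $\operatorname{Tr}(\Delta^\top\Delta\,\mathrm{Cov}(\boldsymbol{\eta}))$, and conclude from the high-pass variance assumption. Your added observation is also right: the zero-padded $\Delta$ is nonsingular, so any nonzero covariance suffices.

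The only divergence is the ``bounded energy'' clause. The paper handles it separately with the boundary-counting bound $E(\boldsymbol{\alpha}_{R^*})^2\le C_{\mathcal{G}}|\partial R^*|/|R^*|^2$ (its Proposition 1). Your bound $E(\boldsymbol{\pi})\le\|\Delta\|_2\le 8$ is valid, but it holds for every point of the simplex, so it does not by itself separate coherent from fragmented attention.
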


\noindent\textbf{Remark.}
This toy analysis provides a geometric interpretation of $E^{(\ell)}$: low Laplacian energy corresponds to spatially coherent and region-localized attention, while higher energy is consistent with fragmented or incoherent visual grounding.


%% file: sections/experiment.tex
\section{Experiment}

\subsection{Experiment Setup}
\textbf{Baselines and models.}
We evaluate our method on four representative MLLMs for image--language understanding: InstructBLIP~\cite{li2023blip}, LLaVA-1.5-7B~\cite{liu2023improved}, Qwen-VL-Chat~\cite{bai2023qwen}, and GLM-4V-9B~\cite{glm2024chatglm}.
For video--language understanding, we additionally report results on Video-LLaVA-7B~\cite{lin-etal-2024-video} and LLaVA-Next-Video-7B-DPO~\cite{liu2024llavanext}.
For comparison, we include several strong training-free hallucination mitigation baselines, including OPERA~\cite{opera2024cvpr}, DoLa~\cite{chuang2023dola}, VCD~\cite{vcd2024cvpr}, ICD~\cite{wang2024mitigating}, SID~\cite{huo2024self}, DeCo~\cite{wang2025mllm}, MemVR~\cite{zoulook}, ASCD~\cite{ASCD}, NoLan~\cite{ren2026nolan}, AIR~\cite{zhu2026look}, DMAS~\cite{yin2026dynamic} and Self-PEP~\cite{videohallucer}.

\textbf{Datasets and Evaluation.} 
To rigorously assess the effectiveness of our proposed method, we conduct a comprehensive set of experiments across POPE benchmark~\cite{li2023evaluating}, CHAIR~\cite{rohrbach2018object}, MME~\cite{fu2023mme}, MM-Vet~\cite{2024MMVet}, LLaVA-Bench (in-the-wild)~\cite{liu2024visual}, HallusionBench~\cite{guan2024hallusionbench} and VideoHallucer benchmark~\cite{videohallucer}.

\textbf{Implementation Details.}
Usually, we set $\alpha=0.1$ and $\mathcal{L}=\{8,9,\ldots,29\}$. We set $\beta \in \{0.2, 0.6\}$ in HallusionBench~\cite{guan2024hallusionbench} and CHAIR~\cite{rohrbach2018object} accordingly. All settings of baseline methods follow the default configurations from the original papers. More details are in Appendix~\ref{apx:experimental_details}.

\subsection{Evaluations on Hallucination Benchmarks}

\input{tables/chair}

We evaluate hallucination on CHAIR, POPE, and HallusionBench, and further test generalization on video understanding; results are in Table~\ref{tab:CHAIR}, Table~\ref{tab:res_pope}, Table~\ref{tab:HallusionBench} and Table~\ref{tab:video-hallucination-main}. In the POPE evaluation, \sexyname is effective across two backbones, clearly surpassing prior SOTAs, with an average accuracy increase of up to +7.9\% and an F1-score increase of up to +7.3\% under the Random, Popular, and Adversarial settings. On CHAIR dataset, \sexyname improves performance by 19.8\% and 34.7\% over vanilla LLaVA-1.5 and Qwen-VL-Chat. Table \ref{tab:HallusionBench} shows the HallusionBench evaluation results, where \sexyname consistently yields the best results among all comparators, including hardaACC and aACC. On video hallucination, \sexyname generalizes well and outperforms baselines by large margins, demonstrating strong robustness beyond image understanding benchmarks.

\input{tables/POPE}

\input{tables/HallusionBench}

\input{tables/Videohallucer}

\subsection{Evaluations on General Capability Benchmarks}

\input{tables/MME}

We evaluate \sexyname on general-capability benchmarks, including MME, MM-Vet, and LLaVA-Bench. We tested three different MLLM backbones. On LLaVA-Bench (Table~\ref{tab:res_llavabench}), \sexyname consistently surpasses the compared baselines. On MME subsets that target object- and attribute-level hallucination together with perception- and cognition-oriented items (Table \ref{tab:MME}), \sexyname delivers uniform improvements at both granularities: the aggregate MME-Hall score increases by 26 and 21 point for the LLaVA- and Qwen-VL–based models, respectively. By contrast, many prior mitigation pipelines sacrifice generality. For example, DeCo causes 89 and 175 point drop on overall perception score for LLaVA-1.5 and GLM-4V. The hallucination performance is improved, yet scores on these broad benchmarks fall. Relative to other CD-based alternatives, \sexyname pairs strong hallucination control with reliable performance on general-purpose evaluation.

\subsection{Method Analysis}
\input{tables/ablation}

\textbf{Ablation Studies.} Table~\ref{tab:CHAIR_ab} verifies that each design choice in \sexyname contributes to hallucination reduction while largely preserving answer coverage. Starting from LLaVA-1.5, LaSCD-$\alpha$ already lowers the average CHAIR score from 32.7 to 29.7 and slightly improves recall, showing that subtracting the hallucination-prone peak-layer preference is an effective first-order correction. Adding the $\beta$ correction further reduces hallucination, but its lower recall suggests that directly injecting the low-energy layer can become conservative when used alone. Using the Sobel operator $\mathcal{K}_{\nabla}$ further improves CHAIR$_I$ to 11.7, suggesting that first-order edge responses can also capture hallucination-related attention irregularities; however, the full Laplacian-based design still gives the best overall average score.

We also compare alternative layer-scoring signals and layer ranges. Using 1D Laplacian kernel $\mathcal{K}_{1D}$, text-side energy $E_{\mathrm{txt}}$, raw visual mass $m^{(\ell)}$, or Shannon entropy $\mathrm{H}_{\text{Shannon}}$ improves over the baseline, but all are weaker than the full method, supporting our claim that the high-frequency energy is a more reliable hallucination indicator than magnitude or dispersion alone.
Likewise, using only a fixed layer $\mathcal{L}{=}\{20\}$ or all layers $\mathcal{L}{=}\{1,\ldots,L\}$ underperforms the final design, suggesting that hallucination-related evidence is layer-dependent and benefits from targeted layer selection.
Overall, the full \sexyname achieves the best average CHAIR score while keeping response length and recall close to the original model. This indicates that the improvement is not obtained by simply shortening answers or suppressing object mentions, but by more accurately reducing hallucinated content.

\textbf{Inference Latency.} \sexyname introduces only a small amount of \emph{additional} compute on top of the \emph{single} autoregressive forward: only two extra decoding head computation to form the contrast decoding. Figure~\ref{fig:radar} summarizes measured per-token latency, showing that \sexyname remains close to the original model while two-forward CD-style baselines incur substantially higher step time.

%% file: tables/chair.tex
\begin{wraptable}{R}{0.5\textwidth}
\centering
\vspace{-1em}
\caption{CHAIR evaluation results of different methods. The max output token length is set to 1024. The best results are in \sethlcolor{lightpink}\hl{green}.}
\resizebox{1.0 \linewidth}{!}{
\begin{tabular}{lccccc}
\toprule
 Methods  & CHAIR$_S$ $\downarrow$ & CHAIR$_I$ $\downarrow$ & {Average} $\downarrow$ & {Len} & {Recall} $\uparrow$ \\ 
\midrule
LLaVA-1.5  & 50.0 \basexdown{0.0} & 15.4 \basexdown{0.0} & 32.7 \basexdown{0.0} & 100.6 & 77.1 \basex{0.0} \\
\ + OPERA & {47.8} \down{2.2} & {14.6} \down{0.8} & {31.2} \down{0.5} & 98.6 & 76.8 \downbad{0.3}  \\
\ + VCD & 48.6 \down{1.4} & 14.9 \down{0.5} & 31.8 \down{0.5} & 100.4 & {77.3} \up{0.2}  \\
\ + ICD & 56.2 \upbad{6.2} & 16.3 \upbad{0.9} & 36.3 \upbad{3.6} & 103.4 & 16.3 \downbad{60.} \\
\ + DeCo & 42.8 \down{7.2} & 12.6 \down{2.8} & 27.7 \down{5.0} & 103.4 & 76.7 \downbad{0.4} \\
\ + MemVR & 47.6 \down{2.4} & 13.8 \down{1.6} & 30.7 \down{2.0} & 101.5 & 79.6 \up{2.5}  \\ 
\ \textbf{+ Ours} & \sethlcolor{lightpink}\hl{40.6} \down{9.4} & \sethlcolor{lightpink}\hl{11.8} \down{2.4} & \sethlcolor{lightpink}\hl{26.2} \down{6.5} & 100.0 & 77.6 \up{0.5}  \\ 

\midrule
Qwen-VL & 6.80 \basex{0.0}  & 5.30 \basex{0.0}  & 6.05 \basex{0.0} & 17.6 & 53.4 \basex{0.0} \\
 + OPERA  & - & - & - & - & -\\
 + VCD & {13.0} \upbad{6.2} & 12.3 \upbad{7.0} & 12.6 \upbad{6.5} & 115.  & 47.9 \downbad{5.5} \\
 + ICD & 18.4 \upbad{11.} & 14.3 \upbad{9.0} & 16.3 \upbad{10.} & 48.1 & 37.6 \downbad{15.}\\
 + DeCo & 12.2 \upbad{5.4} & 6.40 \upbad{1.1} & 9.30 \upbad{3.2} & 31.2 & 57.3 \up{3.9}\\
  {+ MemVR}  & {4.80} \down{2.0} & {3.30} \down{2.0} & {4.05} \down{2.0} & 15.0 & 52.3 \downbad{1.1} \\
  \textbf{+ Ours}  & \sethlcolor{lightpink}\hl{4.80} \down{2.0} & \sethlcolor{lightpink}\hl{3.00} \down{2.3} & \sethlcolor{lightpink}\hl{3.90} \down{2.1} & 16.6 & 52.3 \downbad{1.1} \\
\bottomrule
\end{tabular}
}
\label{tab:CHAIR}
\vspace{-1em}
\end{wraptable}

%% file: tables/POPE.tex
\begin{table}[t]
\centering
\caption{Performance evaluation on POPE benchmark. The best results are in \sethlcolor{lightpink}\hl{green}. We report accuracy and f1-score under three settings, e.g., \emph{Random}, \emph{Popular}, \emph{Adversarial}, and \emph{Average}, to show the robustness of the different methods directly. } 
\resizebox{1.0\linewidth}{!}{
\begin{tabular}{llcccccccc}
\toprule
\multirow{2}{*}[-0.5ex]{{Evaluation}} 
 & \multirow{2}{*}[-0.5ex]{{Methods}} 
 & \multicolumn{2}{c}{\;Random}  
 & \multicolumn{2}{c}{\;Popular} 
 & \multicolumn{2}{c}{\;Adversarial} 
 & \multicolumn{2}{c}{\;Average} \\ 
\cmidrule(lr){3-4} \cmidrule(lr){5-6} \cmidrule(lr){7-8} \cmidrule(l){9-10}
&  
& Accuracy $\uparrow$ & F1-score $\uparrow$ 
& Accuracy $\uparrow$ & F1-score $\uparrow$  
& Accuracy $\uparrow$ & F1-score $\uparrow$  
& Accuracy $\uparrow$ & F1-score $\uparrow$  
\\ 
\midrule
\multirow{10}{*}[-4ex]{{LLaVA-1.5}}
 & {Regular}
 & 83.49 \basex{0.0} & 82.28 \basex{0.0} 
 & 79.98 \basex{0.0} & 79.34 \basex{0.0} 
 & 76.03 \basex{0.0} & 76.26 \basex{0.0} 
 & 79.83 \basex{0.0} & 79.29 \basex{0.0} 
\\

& OPERA \citep{opera2024cvpr}
 & {87.53} \basexx{4.0} & 86.45 \basexx{4.2} 
 & {84.21} \basexx{4.2} & {83.50} \basexx{4.2}
 & 80.88 \basexx{4.9} & {80.69} \basexx{4.4}
 & {84.21} \basexx{4.4} & {83.55} \basexx{4.3} 
\\

& DoLa \citep{chuang2023dola}
 & 85.78 \basexx{2.2} & 84.69 \basexx{2.4} 
 & 80.75 \basexx{0.7} & 81.11 \basexx{1.7} 
 & 77.32 \basexx{1.3} & 76.66 \basexx{0.4} 
 & 81.28 \basexx{1.4} & 80.82 \basexx{1.5} 
\\

& VCD \citep{vcd2024cvpr}
 & 86.84 \basexx{3.4} & {86.83} \basexx{4.6} 
 & 82.65 \basexx{2.7} & 83.37 \basexx{4.0} 
 & 77.31 \basexx{1.3} & 79.28 \basexx{3.0} 
 & 82.27 \basexx{2.4} & 83.16 \basexx{3.9} 
\\

& ICD \citep{wang2024mitigating}
 & 84.87 \basexx{1.4} & 83.27 \basexx{1.0} 
 & 82.93 \basexx{3.0} & 81.45 \basexx{2.1} 
 & {81.07} \basexx{5.0} & 79.96 \basexx{3.7} 
 & 82.96 \basexx{3.1} & 81.56 \basexx{2.3} 
\\

& SID \citep{huo2024self}
 & 87.86 \basexx{4.4} & 85.73 \basexx{3.4} 
 & 83.79 \basexx{3.8} & 85.42 \basexx{6.1} 
 & 82.54 \basexx{6.5} & 81.98 \basexx{5.7} 
 & 84.73 \basexx{4.9} & 84.37 \basexx{5.0} 
\\

& DeCo \citep{wang2025mllm}
 & 86.66 \basexx{3.2} & 89.15 \basexx{6.9} 
 & 84.63 \basexx{4.7} & 85.84 \basexx{6.5} 
 & 77.63 \basexx{1.6} & 80.64 \basexx{4.4} 
 & 82.97 \basexx{3.1} & 85.21 \basexx{5.9} 
\\

& {MemVR} \citep{zoulook}
 & 88.50 \basexx{5.0}  
 & 87.34 \basexx{5.0}
 & 86.07 \basexx{6.1}  
 & 85.60 \basexx{6.2}
 & 81.97 \basexx{5.9}  
 & 81.33 \basexx{5.0}
 & 85.51 \basexx{5.6}  
 & 84.75 \basexx{5.4}
\\ 

& {ASCD} \citep{ASCD}
 & 89.48 \basexx{6.0} & 89.09 \basexx{6.8} 
 & 87.20 \basexx{7.2} & 86.69 \basexx{7.4} 
 & 82.90 \basexx{6.9} & 82.97 \basexx{6.7} 
 & 86.53 \basexx{6.7} & 86.25 \basexx{7.0} 
\\

& {NoLan} \citep{ren2026nolan}
 & 87.11 \basexx{3.6} & 86.60 \basexx{4.3} 
 & 85.81 \basexx{5.8} & 85.17 \basexx{5.8} 
 & 83.83 \basexx{7.8} & 82.58 \basexx{6.3} 
 & 85.58 \basexx{5.8} & 84.78 \basexx{5.5} 
\\

& {AIR} \citep{zhu2026look}
 & 89.00 \basexx{5.5} & 88.20 \basexx{5.9} 
 & 87.10 \basexx{7.1} & 86.40 \basexx{7.1} 
 & \sethlcolor{lightpink}\hl{83.90} \basexx{7.9} 
 & \sethlcolor{lightpink}\hl{83.60} \basexx{7.3} 
 & 86.67 \basexx{6.8} & 86.07 \basexx{6.8} 
\\

& {DMAS} \citep{yin2026dynamic}
 & 90.03 \basexx{6.5} & 90.02 \basexx{7.7} 
 & 87.33 \basexx{7.4} & 87.03 \basexx{7.7} 
 & 83.07 \basexx{7.0} & 83.33 \basexx{7.1} 
 & 86.81 \basexx{7.0} & 86.79 \basexx{7.5} 
\\

& {\textbf{Ours}} 
 & \sethlcolor{lightpink}\hl{90.57} \basexx{7.0}  
 & \sethlcolor{lightpink}\hl{90.86} \basexx{8.0}
 & \sethlcolor{lightpink}\hl{87.45} \basexx{7.5}  
 & \sethlcolor{lightpink}\hl{87.40} \basexx{8.0}
 & 83.46 \basexx{7.4}  
 & 82.60 \basexx{6.3}
 & \sethlcolor{lightpink}\hl{87.16} \basexx{7.3}  
 & \sethlcolor{lightpink}\hl{86.95} \basexx{7.6}
\\ 

\midrule

\multirow{4}{*}[-4ex]{{InstructBLIP}}
 & {Regular}
 & 80.42 \basex{0.0} & 80.94 \basex{0.0} 
 & 76.09 \basex{0.0} & 77.65 \basex{0.0} 
 & 72.37 \basex{0.0} & 75.42 \basex{0.0} 
 & 76.29 \basex{0.0} & 78.00 \basex{0.0} 
\\

& {OPERA} \citep{opera2024cvpr}
 & 84.57 \basex{4.2} & 84.69 \basex{3.8} 
 & 78.52 \basex{2.4} & 81.02 \basex{3.4} 
 & 75.60 \basex{3.2} & 78.37 \basex{3.0} 
 & 79.56 \basex{3.3} & 81.36 \basex{3.4} 
\\

& {DoLa} \citep{chuang2023dola}
 & 83.01 \basex{2.6} & 83.04 \basex{2.1} 
 & 78.97 \basex{2.9} & 79.83 \basex{2.2} 
 & 74.76 \basex{2.4} & 76.58 \basex{1.2} 
 & 78.91 \basex{2.6} & 79.82 \basex{1.8} 
\\

& {VCD} \citep{vcd2024cvpr}
 & 84.11 \basex{3.7} & 84.13 \basex{3.2} 
 & 79.64 \basex{3.6} & 80.80 \basex{3.2} 
 & 76.42 \basex{4.1} & 78.03 \basex{2.6} 
 & 80.06 \basex{3.8} & 80.99 \basex{3.0} 
\\

& {ICD} \citep{wang2024mitigating}
 & 82.57 \basex{2.2} & 83.32 \basex{2.4} 
 & 78.23 \basex{2.1} & 79.12 \basex{1.5} 
 & 75.28 \basex{2.9} & 76.24 \basex{0.8} 
 & 78.69 \basex{2.4} & 79.56 \basex{1.6} 
\\

& {SID} \citep{huo2024self}
 & 86.56 \basex{6.1} & 85.94 \basex{5.0} 
 & 80.26 \basex{4.2} & 81.75 \basex{4.1} 
 & 77.64 \basex{5.3} & 80.41 \basex{5.0} 
 & 81.49 \basex{5.2} & 82.70 \basex{4.7} 
\\

& \textbf{Ours}
 & \sethlcolor{lightpink}\hl{90.02} \basex{9.6} 
 & \sethlcolor{lightpink}\hl{89.75} \basex{8.8} 
 & \sethlcolor{lightpink}\hl{83.56} \basex{7.4} 
 & \sethlcolor{lightpink}\hl{83.87} \basex{6.2} 
 & \sethlcolor{lightpink}\hl{81.03} \basex{8.6} 
 & \sethlcolor{lightpink}\hl{81.83} \basex{6.4} 
 & \sethlcolor{lightpink}\hl{84.87} \basex{8.5} 
 & \sethlcolor{lightpink}\hl{85.15} \basex{7.1} 
\\
\bottomrule
\end{tabular}
}
\label{tab:res_pope}
\vspace{-1.5em}
\end{table}

%% file: tables/HallusionBench.tex
\begin{table*}[t]
\centering
\setlength{\tabcolsep}{3pt}
\begin{minipage}[t]{0.53\textwidth}
\centering
\caption{HallusionBench evaluation results.}
\label{tab:HallusionBench}
\vspace{1mm}
\renewcommand{\arraystretch}{1.2}
\resizebox{\linewidth}{!}{
\begin{tabular}{l*{5}{>{\centering\arraybackslash}p{1.45cm}}}
\toprule
Methods & fACC $\uparrow$ & qACC $\uparrow$ & ${easy}$A $\uparrow$ & ${hard}$A $\uparrow$ & aACC $\uparrow$ \\
\midrule
LLaVA-1.5\; & 17.9 \basex{0.0} & 8.13 \basex{0.0} & 36.0 \basex{0.0} & 36.7 \basex{0.0} & 41.5 \basex{0.0} \\
\ + OPERA & 16.2 \downbad{1.7} & 5.49 \downbad{2.6} & 37.6 \up{1.6} & 35.4 \downbad{1.3} & 41.2 \downbad{0.3} \\
\ + ICD & 13.9 \downbad{4.0} & 8.35 \up{0.2} & 36.9 \up{0.9} & 33.5 \downbad{3.2} & 38.2 \downbad{3.3} \\
\ + VCD & 13.9 \downbad{4.0} & 11.4 \up{3.3} & 33.0 \downbad{3.0} & 34.7 \downbad{2.0} & 41.1 \downbad{0.4} \\
\ + DeCo & 15.3 \downbad{2.6} & 8.13 \basex{0.0} & \sethlcolor{lightpink}\hl{42.1} \up{6.1} & 33.8 \downbad{2.9} & 41.3 \downbad{0.2} \\
\ + MemVR & \sethlcolor{lightpink}\hl{17.9} \basex{0.0} & 9.01 \up{0.9} & 36.9 \up{0.9} & 37.7 \up{1.0} & 42.5 \up{1.0} \\
\ \textbf{+ Ours} & 15.8 \downbad{2.1} & \sethlcolor{lightpink}\hl{13.8} \up{5.7} & 41.5 \up{5.5} & \sethlcolor{lightpink}\hl{43.4} \up{6.7} & \sethlcolor{lightpink}\hl{46.2} \up{4.7} \\
\midrule
Qwen-VL\; & 19.9 \basex{0.0} & 12.3 \basex{0.0} & 39.3 \basex{0.0} & 32.0 \basex{0.0} & 41.8 \basex{0.0} \\
+ DeCo\; & 20.2 \up{0.3} & \sethlcolor{lightpink}\hl{14.0} \up{1.7} & 42.8 \up{3.5} & 34.6 \up{2.6} & 43.6 \up{1.8} \\
\ \textbf{+ Ours} & \sethlcolor{lightpink}\hl{22.2} \up{2.3} & 13.8 \up{1.5} & \sethlcolor{lightpink}\hl{43.0} \up{3.7} & \sethlcolor{lightpink}\hl{35.5} \up{3.5} & \sethlcolor{lightpink}\hl{44.9} \up{3.1} \\
\bottomrule
\end{tabular}}
\end{minipage}
\hfill
\begin{minipage}[t]{0.45\textwidth}
\centering
\caption{LLaVA-Bench evaluation results.}
\label{tab:res_llavabench}
\vspace{0.5mm}
\renewcommand{\arraystretch}{1.0}
\resizebox{\linewidth}{!}{
\begin{tabular}{lcccc}
\toprule
\multirow{2}{*}[-0.5ex]{Method} & \multicolumn{4}{c}{LLaVA-Bench (in-the-wild)} \\
\cmidrule(lr){2-5}
 & Average$\uparrow$ & All\_1$\uparrow$ & All\_2$\uparrow$ & All\_3$\uparrow$ \\
\midrule
LLaVA-1.5 & 45.1 \basex{0.0} & 25.4 \basex{0.0} & 87.8 \basex{0.0} & 22.3 \basex{0.0} \\
{+ DeCo} & 45.6 \up{0.5} & 25.9 \up{0.5} & 87.9 \up{0.1} & 23.0 \up{0.7} \\
\textbf{+ Ours} & 45.9 \up{0.8} & 26.5 \up{1.1} & 88.0 \up{0.2} & 23.3 \up{1.0} \\
\midrule
Qwen-VL & 44.5 \basex{0.0} & 25.0 \basex{0.0} & 86.8 \basex{0.0} & 21.7 \basex{0.0} \\
{+ DeCo} & 44.7 \up{0.2} & 25.6 \up{0.6} & 85.8 \downbad{1.0} & 22.8 \up{1.1} \\
\textbf{+ Ours} & 44.9 \up{0.4} & 26.1 \up{1.1} & 86.3 \downbad{0.5} & 22.5 \up{0.8} \\
\midrule
GLM-4V & 49.1 \basex{0.0} & 33.2 \basex{0.0} & 85.8 \basex{0.0} & 28.5 \basex{0.0} \\
{+ DeCo} & 48.3 \downbad{0.8} & 31.8 \downbad{1.4} & 85.8 \basex{0.0} & 27.3 \downbad{1.2} \\
\textbf{+ Ours} & 49.3 \up{0.2} & 32.9 \downbad{0.3} & 86.5 \up{0.7} & 28.5 \basex{0.0} \\
\bottomrule
\end{tabular}}
\end{minipage}

\vspace{-1em}
\end{table*}

%% file: tables/Videohallucer.tex
\begin{table*}[t]
  \centering
  \caption{%
    VideoHallucer evaluation results of different methods. The best results are in \sethlcolor{lightpink}\hl{green}.}
  \label{tab:video-hallucination-main}
  \sethlcolor{lightpink}%
  \small
  \setlength{\tabcolsep}{5pt}
  \resizebox{0.9 \linewidth}{!}{%
  \begin{tabular}{@{}l cccccc@{}}
    \toprule
    \multicolumn{7}{c}{\textbf{Video-LLaVA-7B}} \\
    \midrule
    Method
    & Object--Relation$\uparrow$
    & Temporal$\uparrow$
    & Semantic$\uparrow$
    & Factual$\uparrow$
    & Non-Factual$\uparrow$
    & Overall$\uparrow$\\
    \midrule
    Regular
    & 0.355\,\basex{0.000} & 0.097\,\basex{0.000} & 0.120\,\basex{0.000} & 0.030\,\basex{0.000} & 0.260\,\basex{0.000}
    & 0.197\,\basex{0.000} \\
    \textsc{+ Self-Pep}
    & \hl{0.520}\,\up{0.165}
    & 0.063\,\downbad{0.034}
    & \hl{0.360}\,\up{0.240}
    & 0.110\,\up{0.080}
    & 0.340\,\up{0.080}
    & 0.432\,\up{0.235} \\
    + DeCo
    & 0.355\,\basex{0.000} & 0.097\,\basex{0.000} & 0.120\,\basex{0.000} & 0.030\,\basex{0.000} & 0.260\,\basex{0.000}
    & 0.197\,\basex{0.000} \\
    \textbf{+ Ours}
    & 0.410\,\up{0.055}
    & \hl{0.188}\,\up{0.091}
    & 0.270\,\up{0.150}
    & \hl{0.150}\,\up{0.120}
    & \hl{0.385}\,\up{0.125}
    & \hl{0.522}\,\up{0.325} \\
    \midrule
    \multicolumn{7}{c}{\textbf{LLaVA-NeXT-Video-7B-DPO}} \\
    \midrule
    Method
    & Object--Relation$\uparrow$
    & Temporal$\uparrow$
    & Semantic$\uparrow$
    & Factual$\uparrow$
    & Non-Factual$\uparrow$
    & Overall$\uparrow$\\
    \midrule
    Regular
    & 0.510\,\basex{0.000} & 0.278\,\basex{0.000} & 0.380\,\basex{0.000} & 0.145\,\basex{0.000} & 0.305\,\basex{0.000}
    & 0.609\,\basex{0.000} \\
    \textsc{+ Self-Pep}
    & 0.520\,\up{0.010}
    & 0.193\,\downbad{0.085}
    & 0.370\,\downbad{0.010}
    & 0.115\,\downbad{0.030}
    & 0.155\,\downbad{0.150}
    & \hl{0.744}\,\up{0.135} \\
    + DeCo
    & 0.510\,\basex{0.000} & 0.193\,\downbad{0.085} & 0.390\,\up{0.010} & 0.125\,\downbad{0.020} & 0.285\,\downbad{0.020}
    & 0.619\,\up{0.010} \\
    \textbf{+ Ours}
    & \hl{0.520}\,\up{0.010}
    & \hl{0.193}\,\downbad{0.085}
    & \hl{0.395}\,\up{0.015}
    & \hl{0.145}\,\basex{0.000}
    & \hl{0.300}\,\downbad{0.005}
    & 0.648\,\up{0.039} \\
    \bottomrule
  \end{tabular}
  }
\vspace{-1mm}
\end{table*}

%% file: tables/MME.tex
\begin{table}[t]
\centering
\caption{Performance evaluation on MME benchmark and MM-Vet. The best results are in \sethlcolor{lightpink}\hl{green}. }
\resizebox{1.0\linewidth}{!}{
\begin{tabular}{llcccccccc}
\toprule
\multirow{2}{*}[-0.5ex]{{Evaluation}} 
 & \multirow{2}{*}[-0.5ex]{{Methods}}  
 & \multicolumn{1}{c}{{MME-Hall}} 
 & \multicolumn{2}{c}{{Object-Level}}  
 & \multicolumn{2}{c}{{Attribute-Level}} 
 & \multicolumn{1}{c}{{Perception}} 
 & \multicolumn{1}{c}{{Cognition}} 
 & \multicolumn{1}{c}{{MM-Vet}} 
 \\ 
\cmidrule(lr){3-3} \cmidrule(lr){4-5} \cmidrule(lr){6-7} \cmidrule(lr){8-8} \cmidrule(lr){9-9} \cmidrule(l){10-10}
&  
& {Total} $\uparrow$
& {Existence} $\uparrow$ 
& {Count} $\uparrow$ 
& {Position} $\uparrow$
& {Color} $\uparrow$ 
& {Total} $\uparrow$
& {Total} $\uparrow$ 
& {Accuracy} $\uparrow$  
\\ 
\midrule

\multirow{6}{*}[-4ex]{{LLaVA-1.5}}
& {Regular} 
   & {643.3} \basex{0.0} 
   & {190.0} \basex{0.0} 
   & 155.0 \basex{0.0} 
   & {128.3} \basex{0.0} 
   & 170.0 \basex{0.0} 
   & 1508.97 \basex{0.0}
   & 335.71 \basex{0.0}
   & 27.50 \basex{0.0}
   \\

& OPERA  
   & {610.0} \downbad{33.} 
   & \sethlcolor{lightpink}\hl{195.0} \up{5.0} 
   & 128.3 \downbad{26.} 
   & {121.7} \downbad{6.6} 
   & 165.0 \downbad{5.0}
   & {1473.62} \downbad{35.}
   & {310.71} \downbad{45.}
   & {32.00} \up{4.5}
   \\

& VCD 
   & 648.3 \up{5.0} 
   & 190.0 \basex{0.0}
   & 155.0 \basex{0.0}
   & 133.3 \up{5.0}
   & 170.0 \basex{0.0}
   & 1515.01 \up{6.0}
   & 337.86 \up{2.1}
   & 26.70 \downbad{0.8}
   \\

& ICD  
   & 583.3 \downbad{60.}
   & 185.0 \downbad{5.0}
   & 130.0 \downbad{25.}
   & 121.7 \downbad{6.6}
   & 146.7 \downbad{23.}
   & 1306.91 \downbad{202}
   & 287.86 \downbad{67.}
   & 25.90 \downbad{1.6}
   \\

& DeCo  
   & 608.3 \downbad{35.}
   & 190.0 \basex{0.0}
   & 135.0 \downbad{20.}
   & 118.3 \downbad{10.}
   & 165.0 \downbad{5.0}
   & 1419.98 \downbad{89.}
   & 304.64 \downbad{31.}
   & 26.10 \downbad{1.4}
   \\

& {MemVR}
   & 648.3 \up{5.0}
   & 190.0 \basex{0.0}
   & 155.0 \basex{0.0}
   & 133.3 \up{5.0}
   & 170.0 \basex{0.0}
   & 1512.80 \up{9.7}
   & 335.71 \basex{0.0}
   & 32.40 \up{4.9}
   \\

& {AIR}
   & 638.3 \downbad{5.0}
   & 190.0 \basex{0.0}
   & 155.0 \basex{0.0}
   & 123.3 \downbad{5.0}
   & 170.0 \basex{0.0}
   & 1504.17 \downbad{4.8}
   & \sethlcolor{lightpink}\hl{372.50} \up{36.}
   & -
   \\

& {DMAS}
   & 659.9 \up{16.}
   & 195.0 \up{5.0}
   & 158.3 \up{3.3}
   & 133.3 \up{5.0}
   & 173.3 \basex{0.0}
   & - 
   & -
   & -
   \\

& \textbf{Ours}
   & \sethlcolor{lightpink}\hl{670.0} \up{26.}
   & 190.0 \basex{0.0}
   & \sethlcolor{lightpink}\hl{160.0} \up{5.0}
   & \sethlcolor{lightpink}\hl{145.0} \up{16.}
   & \sethlcolor{lightpink}\hl{175.0} \up{5.0}
   & \sethlcolor{lightpink}\hl{1525.92} \up{16.}
   & 341.78 \up{6.0}
   & \sethlcolor{lightpink}\hl{33.40} \up{5.9}
   \\ 
\midrule

\multirow{6}{*}[-4ex]{{Qwen-VL}}
& {Regular} 
   & 600.0 \basex{0.0}
   & 180.0 \basex{0.0}
   & 125.0 \basex{0.0}
   & 130.0 \basex{0.0}
   & 165.0 \basex{0.0}
   & {1442.79} \basex{0.0}
   & {342.14} \basex{0.0}
   & {31.10} \basex{0.0}
   \\

& OPERA  
   & \quad- \quad & \quad- \quad & \quad- \quad & \quad- \quad & \quad- \quad & \quad- \quad & \quad- \quad & \quad- \quad
   \\

& VCD 
   & 620.0 \up{20.}
   & {175.0} \downbad{5.0}
   & 125.0 \basex{0.0}
   & \sethlcolor{lightpink}\hl{145.0} \up{15.}
   & 175.0 \up{10.}
   & {1403.17} \downbad{39.}
   & 317.89 \downbad{24.}
   & {24.60} \downbad{6.5}
   \\
   
& ICD  
   & 591.6 \downbad{8.4}
   & 170.0 \downbad{10.}
   & 123.3 \downbad{1.7}
   & 138.3 \up{8.3}
   & 160.0 \downbad{5.0}
   & 1472.06 \up{29.}
   & 351.69 \up{9.5}
   & 21.70 \downbad{9.4}
   \\

& DeCo  
   & 613.3 \up{13.}
   & 180.0 \basex{0.0}
   & 125.0 \basex{0.0}
   & 133.3 \up{3.3}
   & \sethlcolor{lightpink}\hl{175.0} \up{10.}
   & 1489.53 \up{46.}
   & 356.21 \up{14.}
   & 31.50 \up{0.4}
   \\

& {MemVR}
   & {610.0} \up{10.}
   & 180.0 \basex{0.0}
   & 130.0 \up{5.0}
   & 130.0 \basex{0.0}
   & 170.0 \up{5.0}
   & 1473.45 \up{30.}
   & 347.50 \up{5.4}
   & \sethlcolor{lightpink}\hl{33.40} \up{2.3}
   \\

& {AIR}
   & \sethlcolor{lightpink}\hl{636.7} \up{36.}
   & \sethlcolor{lightpink}\hl{185.0} \basex{0.0}
   & 145.0 \up{5.0}
   & 126.7 \basex{0.0}
   & 180.0 \up{5.0}
   & 1476.87 \up{34.}
   & 352.14 \up{10.}
   & -
   \\

& {DMAS}
   & {633.3} \up{33.}
   & 170.0 \basex{0.0}
   & \sethlcolor{lightpink}\hl{145.0} \up{5.0}
   & 133.3 \basex{0.0}
   & \sethlcolor{lightpink}\hl{185.5} \up{5.0}
   & -
   & -
   & -
   \\

& \textbf{Ours}
   & 621.6 \up{21.}
   & 180.0 \basex{0.0}
   & 135.0 \up{10.}
   & 141.6 \up{11.}
   & 165.0 \basex{0.0}
   & \sethlcolor{lightpink}\hl{1513.79} \up{71.}
   & \sethlcolor{lightpink}\hl{357.14} \up{15.}
   & {32.80} \up{1.7}
   \\ 
\midrule

\multirow{5}{*}[-4ex]{{GLM-4V}}
& {Regular} 
   & 703.3 \basex{0.0}
   & 200.0 \basex{0.0}
   & 168.3 \basex{0.0}
   & 156.7 \basex{0.0}
   & 178.3 \basex{0.0}
   & 1680.89 \basex{0.0}
   & 479.64 \basex{0.0}
   & 37.60 \basex{0.0}
   \\

& OPERA  
   & - & - & - & - & -
   & - & - & -
   \\

& VCD 
   & 675.0 \downbad{28.}
   & 200.0 \basex{0.0}
   & 166.7 \downbad{1.6}
   & 138.3 \downbad{18.}
   & 170.0 \downbad{8.3}
   & 1624.10 \downbad{56.}
   & 481.43 \up{1.8}
   & 33.60 \downbad{4.0}
   \\

& ICD  
   & 628.3 \downbad{75.}
   & 200.0 \basex{0.0}
   & 151.7 \downbad{16.}
   & 113.3 \downbad{43.}
   & 163.3 \downbad{15.}
   & 1566.09 \downbad{114}
   & \sethlcolor{lightpink}\hl{508.21} \up{28.}
   & 32.50 \downbad{5.1}
   \\

& DeCo  
   & 610.5 \downbad{92.}
   & 200.0 \basex{0.0}
   & 160.0 \downbad{8.3}
   & 105.5 \downbad{51.}
   & 145.0 \downbad{33.}
   & 1505.86 \downbad{175}
   & 443.57 \downbad{36.}
   & 34.40 \downbad{3.2}
   \\

& {MemVR}
   & 703.3 \basex{0.0}
   & 200.0 \basex{0.0}
   & 173.3 \up{5.0}
   & 151.7 \downbad{5.0}
   & 178.3 \basex{0.0}
   & 1683.03 \up{2.1}
   & 487.14 \up{7.5}
   & 38.20 \up{0.6}
   \\

& {AIR}
   & 703.3 \basex{0.0}
   & 200.0 \basex{0.0}
   & 173.3 \up{5.0}
   & 151.7 \downbad{5.0}
   & 178.3 \basex{0.0}
   & 1681.89 \up{1.0}
   & 479.64 \basex{0.0}
   & -
   \\

& \textbf{Ours}
   & \sethlcolor{lightpink}\hl{709.9} \up{6.6}
   & \sethlcolor{lightpink}\hl{200.0} \basex{0.0}
   & \sethlcolor{lightpink}\hl{178.3} \up{10.}
   & \sethlcolor{lightpink}\hl{153.3} \downbad{3.4}
   & \sethlcolor{lightpink}\hl{178.3} \basex{0.0}
   & \sethlcolor{lightpink}\hl{1692.02} \up{11.}
   & 496.78 \up{17.}
   & \sethlcolor{lightpink}\hl{38.80} \up{1.2}
   \\
\bottomrule
\end{tabular}
}\vspace{-1em}
\label{tab:MME}
\end{table}

%% file: tables/ablation.tex




\begin{wraptable}{R}{0.5\textwidth}
\centering
\vspace{-1em}
\caption{Ablation results on CHAIR benchmark. The best results are in \sethlcolor{lightpink}\hl{green}.}
\resizebox{1.0 \linewidth}{!}{
\begin{tabular}{lccccc}
\toprule
 Methods  & CHAIR$_S$ $\downarrow$ & CHAIR$_I$ $\downarrow$ & {Average} $\downarrow$ & {Len} & {Recall} $\uparrow$ \\ 
\midrule
\ LLaVA-1.5  & 50.0 \basexdown{0.0} & 15.4 \basexdown{0.0} & 32.7 \basexdown{0.0} & 100.6 & 77.1 \basex{0.0} \\
\ + LaSCD-$\alpha$ & 47.0 \down{3.0} & 12.3 \down{3.1} & 29.7 \down{3.0} & 99.4 & 78.5 \up{1.4}  \\
\ + LaSCD-$\beta$ & 43.4 \down{6.6} & 12.2 \down{3.2} & 27.8 \down{4.9} & 98.9 & 75.3 \downbad{1.8}  \\
\ + $\mathcal{K}_\nabla$ & 43.4 \down{6.6} & \sethlcolor{lightpink}\hl{11.7} \down{3.7} & 27.6 \down{5.1} & 98.9 & 76.8 \downbad{0.3}  \\
\ + $\mathcal{K}_{1D}$ & 40.8 \down{9.2} & 12.0 \down{3.4} & 26.4 \down{6.3} & 100.0 & 77.9 \up{0.8}  \\ 
\ + $E_{\mathrm{txt}}$ & 48.6 \down{1.4} & 14.2 \down{1.2} & 31.4 \down{1.3} & 101.0 & 81.5 \up{4.4} \\
\ + $m^{(l)}$ & 48.0 \down{2.0} & 12.9 \down{2.5} & 30.4 \down{2.3} & 98.7 & 78.3 \up{1.2} \\
\ + $\mathrm{H}_{\text{Shannon}}$ & 45.6 \down{4.4} & 12.1 \down{3.3} & 28.8 \down{2.3} & 98.8 & 78.2 \up{1.1} \\
\ + $\mathcal{L} \in \{20\}$ & 45.4 \down{4.6} & 12.5 \down{2.9} & 29.0 \down{3.7} & 99.1 & 78.1 \up{1.0} \\
\ + $\mathcal{L} \in \{1,\ldots,L\}$ & 44.0 \down{6.0} & 12.7 \down{2.7} & 28.4 \down{4.3} & 98.3 & 77.3 \up{0.2}  \\ 
\midrule
\ \textbf{+ Ours} & \sethlcolor{lightpink}\hl{40.6} \down{9.4} & 11.8 \down{3.6} & \sethlcolor{lightpink}\hl{26.2} \down{6.5} & 100.0 & 77.6 \up{0.5}  \\ 
\bottomrule
\end{tabular}
}
\label{tab:CHAIR_ab}
\vspace{-1em}
\end{wraptable}

%% file: sections/conclusion.tex
\section{Conclusion}
\label{sec:conclusion}

In this paper, we show that visual hallucination in MLLMs is not reliably explained by insufficient visual-attention magnitude, while Laplacian energy exposes both hallucinated commitment and transient ground-truth recovery. Motivated by this finding, we propose \sexyname, a training-free decoding strategy that selects informative layers through high-frequency energy and remaps logits in closed form. Experiments across several benchmarks show that this spectral attention signal yields more faithful outputs without sacrificing general capabilities. We hope this finding offers a useful starting point for future studies of when and where hallucinations emerge in MLLMs.

%% file: sections/appendix.tex
\appendix

\section{Theoretical Analysis}
\label{apx:theory}

This section gives a lightweight geometric justification for using Laplacian energy as a
layer-wise hallucination signal. The goal is not to fully model the internal dynamics of an
MLLM, but to isolate one structural property that our empirical analysis repeatedly observes:
\emph{faithful visual grounding tends to be spatially coherent, whereas hallucination-prone
states often exhibit fragmented or high-curvature visual attention}. We therefore analyze a
normalized visual-attention distribution, which fixes the total visual mass and exposes the
effect of spatial structure alone.

\paragraph{Setup: normalized grounding geometry.}
Let visual tokens form a 2-D grid graph $\mathcal{G}=(\mathcal{V}_g,\mathcal{E})$ with $N_v$
nodes, where edges connect spatial neighbors. At layer $\ell$, let
\begin{equation}
    \boldsymbol{\alpha}^{(\ell)}\in\mathbb{R}_{\ge 0}^{N_v},
    \quad
    \mathbf{1}^{\top}\boldsymbol{\alpha}^{(\ell)}=1
\end{equation}
be the visual-attention slice after normalization over visual tokens. For a grounded
question, assume the correct answer $y^*$ depends primarily on a coherent region
$R^*\subseteq\mathcal{V}_g$ (e.g., an object, an attribute-bearing part, or a pair of
spatially related regions). A faithful grounding query should concentrate attention on
$R^*$ and vary smoothly inside it, since neighboring patches on the same visual entity
provide correlated evidence.

Let $\Delta$ be the discrete grid Laplacian and define the normalized high-frequency energy
\begin{equation}
    E(\boldsymbol{\alpha}) = \|\Delta\boldsymbol{\alpha}\|_2.
    \label{eq:theory-energy}
\end{equation}
In 1-D, $[\Delta\boldsymbol{\alpha}]_i=\alpha_{i+1}-2\alpha_i+\alpha_{i-1}$; in 2-D this
is the standard five-point Laplacian. $E(\boldsymbol{\alpha})$ is small when attention is
smooth over the grid, and large when neighboring tokens receive sharply different weights.

\paragraph{Proposition 1 (coherent grounding has bounded high-frequency energy).}
Consider the ideal region-uniform attention
\begin{equation}
    \boldsymbol{\alpha}_{R^*}(i)
    =
    \begin{cases}|R^*|^{-1}, & i\in R^*,\\ 0, & i\notin R^*.\end{cases}
    \label{eq:region-attn}
\end{equation}
Let $\partial R^*$ denote the grid boundary of $R^*$. Then
\begin{equation}
    E(\boldsymbol{\alpha}_{R^*})^2
    \;\le\;
    \frac{C_{\mathcal{G}}\,|\partial R^*|}{|R^*|^2},
    \label{eq:coherent-bound}
\end{equation}
where $C_{\mathcal{G}}$ depends only on the local grid degree.

\begin{proof}
For nodes strictly inside $R^*$, all neighbors share the same value $|R^*|^{-1}$, so
$[\Delta\boldsymbol{\alpha}_{R^*}]_i=0$. For nodes strictly outside $R^*$ and not adjacent
to its boundary, all values are zero, so the Laplacian is again zero. Nonzero responses
occur only at nodes adjacent to $\partial R^*$; each such response has magnitude at most
$C_{\mathcal{G}}|R^*|^{-1}$ for a degree-dependent constant $C_{\mathcal{G}}$. Summing
squared responses over the $O(|\partial R^*|)$ boundary-adjacent nodes yields
Eq.~\eqref{eq:coherent-bound}.
\end{proof}

\noindent
The bound shows that coherent region grounding is spectrally low-frequency: high-frequency
energy accumulates only around object boundaries, not throughout the answer-relevant
region. Larger or more compact regions, which have a smaller
$|\partial R^*|/|R^*|^2$ ratio, therefore incur lower normalized Laplacian energy.

\paragraph{Proposition 2 (fragmentation increases squared high-frequency energy).}
Suppose the same total support area $A=|R^*|$ is split into $K\ge 2$ disconnected
components $R_1,\ldots,R_K$, each with area $\approx A/K$, and define
\begin{equation}
    \boldsymbol{\alpha}_{\mathrm{frag}}(i)
    =
    \begin{cases}A^{-1}, & i\in\bigcup_{k=1}^K R_k,\\ 0, & \text{otherwise.}\end{cases}
\end{equation}
Then
\begin{equation}
    E(\boldsymbol{\alpha}_{\mathrm{frag}})^2
    \;\asymp\;
    \frac{\sum_{k=1}^{K}|\partial R_k|}{A^2}.
    \label{eq:frag-energy}
\end{equation}
On a 2-D grid, the discrete isoperimetric inequality implies that a compact component of
area $A/K$ has boundary $\Omega(\sqrt{A/K})$, so
\begin{equation}
    \sum_{k=1}^{K}|\partial R_k|
    \;=\;
    \Omega\!\left(K\sqrt{A/K}\right)
    =
    \Omega\!\left(\sqrt{KA}\right).
    \label{eq:isoperimetric-frag}
\end{equation}
A single compact region of area $A$ has boundary $O(\sqrt{A})$, so splitting into $K$
pieces inflates the boundary term and hence $E(\boldsymbol{\alpha}_{\mathrm{frag}})^2$
by a factor $\Omega(\sqrt{K})$ at the level of this boundary-scaling approximation.

\noindent
Propositions~1--2 establish the geometric core of the argument: coherent grounding has
bounded spatial curvature, whereas fragmented attention incurs additional spectral cost
through its enlarged boundary.

\paragraph{Theorem 1 (incoherent perturbations raise expected squared energy).}
When the model fails to localize $R^*$, its visual attention can degrade into a
\emph{spatially incoherent search}: attention mass is spread across tokens that are
irrelevant or geometrically dispersed, producing a distribution with high spatial variance.
As a local model on the probability simplex, write
\begin{equation}
    \boldsymbol{\alpha}_{\mathrm{hall}}
    =
    \boldsymbol{\pi}+\boldsymbol{\eta},
    \quad
    \mathbf{1}^{\top}\boldsymbol{\pi}=1,
    \quad
    \mathbf{1}^{\top}\boldsymbol{\eta}=0,
    \quad
    \mathbb{E}[\boldsymbol{\eta}]=\mathbf{0},
    \quad
    \mathrm{Cov}(\boldsymbol{\eta})=\Sigma_{\eta}\succeq 0,
    \label{eq:hall-model}
\end{equation}
where $\boldsymbol{\pi}$ is an uninformative baseline (e.g., near-uniform) and
$\boldsymbol{\eta}$ is a zero-mean perturbation in the simplex tangent space. Equivalently,
$\Sigma_\eta\mathbf{1}=\mathbf{0}$. We assume the perturbation is small enough that
$\boldsymbol{\pi}+\boldsymbol{\eta}$ remains nonnegative, or view Eq.~\eqref{eq:hall-model}
as a local approximation around $\boldsymbol{\pi}$. The only structural condition is that
$\Sigma_\eta$ has non-negligible projection onto the high-pass subspace of $\Delta$:
\begin{equation}
    \sigma_\Delta^2
    \;\triangleq\;
    \frac{\operatorname{Tr}(\Delta^\top\Delta\,\Sigma_\eta)}
         {\operatorname{Tr}(\Delta^\top\Delta)}
    \;>\;0.
    \label{eq:hp-var}
\end{equation}

\begin{theorem}
\label{thm:energy-amplification}
Under the grounding-failure model~\eqref{eq:hall-model} with $\sigma_\Delta^2>0$,
\begin{equation}
    \mathbb{E}\!\left[E(\boldsymbol{\alpha}_{\mathrm{hall}})^2\right]
    =
    \|\Delta\boldsymbol{\pi}\|_2^2
    +
    \underbrace{\operatorname{Tr}(\Delta^\top\Delta\,\Sigma_\eta)}_{=\,\sigma_\Delta^2\,\operatorname{Tr}(\Delta^\top\Delta)\,>\,0}
    \;>\;
    \|\Delta\boldsymbol{\pi}\|_2^2
    \;=\;
    \mathbb{E}\!\left[E(\boldsymbol{\alpha}_{\mathrm{hall}})^2\right]\big|_{\Sigma_\eta=0}.
    \label{eq:energy-amplification}
\end{equation}
Moreover, because the norm is convex,
\begin{equation}
    \mathbb{E}\!\left[E(\boldsymbol{\alpha}_{\mathrm{hall}})\right]
    \;\ge\;
    E(\boldsymbol{\pi}).
\end{equation}
\end{theorem}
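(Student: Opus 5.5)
The plan is a bias--variance expansion of the quadratic form $E(\boldsymbol{\alpha})^2=\|\Delta\boldsymbol{\alpha}\|_2^2=\boldsymbol{\alpha}^\top\Delta^\top\Delta\,\boldsymbol{\alpha}$, followed by Jensen's inequality for the unsquared statement.

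\textbf{Step 1 (expand).} Substituting $\boldsymbol{\alpha}_{\mathrm{hall}}=\boldsymbol{\pi}+\boldsymbol{\eta}$ from Eq.~\eqref{eq:hall-model} gives
\[
\|\Delta(\boldsymbol{\pi}+\boldsymbol{\eta})\|_2^2=\|\Delta\boldsymbol{\pi}\|_2^2+2\,\boldsymbol{\pi}^\top\Delta^\top\Delta\,\boldsymbol{\eta}+\boldsymbol{\eta}^\top\Delta^\top\Delta\,\boldsymbol{\eta}.
\]

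\textbf{Step 2 (take expectations).} The baseline $\boldsymbol{\pi}$ is deterministic, so the first term is unchanged. The cross term is linear in $\boldsymbol{\eta}$, and since $\mathbb{E}[\boldsymbol{\eta}]=\mathbf{0}$ its expectation vanishes. For the quadratic term I use the trace identity $\boldsymbol{\eta}^\top M\boldsymbol{\eta}=\operatorname{Tr}(M\boldsymbol{\eta}\boldsymbol{\eta}^\top)$ with $M=\Delta^\top\Delta$. Linearity of trace and expectation then gives $\operatorname{Tr}(M\,\mathbb{E}[\boldsymbol{\eta}\boldsymbol{\eta}^\top])=\operatorname{Tr}(M\Sigma_\eta)$, because $\mathbb{E}[\boldsymbol{\eta}\boldsymbol{\eta}^\top]=\Sigma_\eta$ for a mean-zero vector. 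This yields the displayed identity in Eq.~\eqref{eq:energy-amplification}.

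\textbf{Step 3 (strictness).} By the definition in Eq.~\eqref{eq:hp-var}, $\operatorname{Tr}(\Delta^\top\Delta\,\Sigma_\eta)=\sigma_\Delta^2\operatorname{Tr}(\Delta^\top\Delta)$. We have $\operatorname{Tr}(\Delta^\top\Delta)=\|\Delta\|_F^2>0$ because the grid Laplacian is not the zero matrix, and $\sigma_\Delta^2>0$ by hypothesis. Hence the added term is strictly positive. It is worth noting a consistency check: $\operatorname{Tr}(M\Sigma_\eta)=\operatorname{Tr}(\Sigma_\eta^{1/2}M\Sigma_\eta^{1/2})\ge 0$ always holds because $M\succeq0$ and $\Sigma_\eta\succeq0$, so the hypothesis only excludes the degenerate case where it equals zero. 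Setting $\Sigma_\eta=0$ (so $\boldsymbol{\eta}=\mathbf{0}$ almost surely) recovers the baseline $\|\Delta\boldsymbol{\pi}\|_2^2=E(\boldsymbol{\pi})^2$. This is exactly Eq.~\eqref{eq:energy-amplification-sum} of Theorem~\ref{thm:normalized-energy-summary}. The constraint $\mathbf{1}^\top\boldsymbol{\eta}=0$ (equivalently $\Sigma_\eta\mathbf{1}=\mathbf{0}$) plays no role in the computation; it only ensures the model stays on the simplex.

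\textbf{Step 4 (unsquared bound).} The map $\boldsymbol{x}\mapsto\|\Delta\boldsymbol{x}\|_2$ is convex, being a norm composed with a linear map. Jensen's inequality therefore gives
\[
\mathbb{E}[E(\boldsymbol{\pi}+\boldsymbol{\eta})]\ge E(\mathbb{E}[\boldsymbol{\pi}+\boldsymbol{\eta}])=E(\boldsymbol{\pi}).
\]

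\textbf{Main obstacle.} The argument is elementary, and the only step needing care is the interpretive one. The energy in Eq.~\eqref{eq:energy} uses zero-padded convolution, so I must identify it with multiplication by a fixed symmetric matrix $\Delta$; any fixed linear operator suffices. I also need $\boldsymbol{\eta}$ to have finite second moments so that $\Sigma_\eta$ exists and the expectation can be exchanged with the trace. The nonnegativity caveat on $\boldsymbol{\pi}+\boldsymbol{\eta}$ is irrelevant to the algebra, so I would state it only as a modeling assumption.
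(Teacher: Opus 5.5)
Your proposal is correct and follows the paper's proof essentially step for step: you expand $\|\Delta(\boldsymbol{\pi}+\boldsymbol{\eta})\|_2^2$, use $\mathbb{E}[\boldsymbol{\eta}]=\mathbf{0}$ to remove the cross term, identify the residual as $\operatorname{Tr}(\Delta^\top\Delta\,\Sigma_\eta)=\sigma_\Delta^2\operatorname{Tr}(\Delta^\top\Delta)>0$, and apply Jensen to the convex map $\boldsymbol{x}\mapsto\|\Delta\boldsymbol{x}\|_2$. Your additional remarks (the trace identity, $\operatorname{Tr}(\Delta^\top\Delta)=\|\Delta\|_F^2>0$, and the finite second-moment requirement) make explicit details that the paper leaves implicit.
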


\begin{proof}
Expand
\begin{equation}
    \mathbb{E}\|\Delta(\boldsymbol{\pi}+\boldsymbol{\eta})\|_2^2
    =
    \|\Delta\boldsymbol{\pi}\|_2^2
    + 2(\Delta\boldsymbol{\pi})^\top\mathbb{E}[\Delta\boldsymbol{\eta}]
    + \mathbb{E}\|\Delta\boldsymbol{\eta}\|_2^2 .
\end{equation}
The cross term vanishes because $\mathbb{E}[\boldsymbol{\eta}]=\mathbf{0}$. The residual
term equals
\begin{equation}
    \mathbb{E}\|\Delta\boldsymbol{\eta}\|_2^2
    =
    \operatorname{Tr}(\Delta^\top\Delta\,\Sigma_\eta)
    =
    \sigma_\Delta^2\operatorname{Tr}(\Delta^\top\Delta)
    >0
\end{equation}
by condition~\eqref{eq:hp-var}. For the unsquared energy, Jensen's inequality applied to
the convex function $\boldsymbol{x}\mapsto\|\Delta\boldsymbol{x}\|_2$ gives
\begin{equation}
    \mathbb{E}\|\Delta(\boldsymbol{\pi}+\boldsymbol{\eta})\|_2
    \ge
    \|\Delta(\boldsymbol{\pi}+\mathbb{E}\boldsymbol{\eta})\|_2
    =
    \|\Delta\boldsymbol{\pi}\|_2.
\end{equation}
\end{proof}

\paragraph{Implication.}
The normalized analysis supports a modest but useful conclusion: when total visual mass is held fixed, coherent region localization has bounded Laplacian energy, while fragmentation and high-pass stochastic variation increase the spectral cost. The actual \sexyname score uses the unnormalized visual slice, combineing this spatial-shape term with the visual-attention mass as shown in Eq.~\eqref{eq:energy}. We use the theory as a
geometric explanation for the spatial component of the score, and rely on the empirical results in Sec.~\ref{sec:finding1}--\ref{sec:finding2} to justify the mass-weighted version used for decoding.

\section{Experimental Details}
\label{apx:experimental_details}

\subsection{Benchmarks and Metrics}
\label{apx:benchmarks}

\textbf{Hallucination benchmarks.}
We evaluate image-level hallucination on POPE~\cite{li2023evaluating}, CHAIR~\cite{rohrbach2018object}, and HallusionBench~\cite{guan2024hallusionbench}, and further evaluate video hallucination on VideoHallucer~\cite{videohallucer}. POPE is a binary VQA-style benchmark that probes object existence with questions of the form ``Is [object] in this image?''; we append the instruction ``Please answer yes or no.'' and report performance under the random, popular, and adversarial splits. CHAIR evaluates object hallucination in generated captions. Following standard practice, we use images from the MSCOCO Val2014 split~\cite{lin2014microsoft} and the prompt ``Please describe this image in detail.'' We report CHAIR$_S$, which measures the fraction of captions containing at least one hallucinated object, and CHAIR$_I$, which measures the fraction of mentioned objects that are hallucinated:
\begin{equation}
\mathrm{CHAIR}_I =
\frac{|\{\text{hallucinated objects}\}|}
{|\{\text{all mentioned objects}\}|},
\qquad
\mathrm{CHAIR}_S =
\frac{|\{\text{sentences with hallucinated objects}\}|}
{|\{\text{all sentences}\}|}.
\end{equation}
HallusionBench evaluates multimodal hallucination with a broader set of visual reasoning and language biases; we follow the official evaluation protocol. VideoHallucer evaluates hallucination in large video-language models with adversarial binary VideoQA pairs, covering both intrinsic hallucinations, such as object-relation and temporal hallucinations, and extrinsic hallucinations, including factual and non-factual hallucinations.

\textbf{General multimodal benchmarks.}
To ensure that hallucination mitigation does not degrade general ability, we evaluate on MME~\cite{fu2023mme}, MM-Vet~\cite{2024MMVet}, and LLaVA-Bench-in-the-wild~\cite{liu2024visual}. MME contains perception and cognition categories and uses yes/no questions. MM-Vet evaluates integrated multimodal capabilities with open-ended responses. For LLaVA-Bench-in-the-wild, we follow the GPT-based evaluation protocol used by the benchmark. All benchmarks that require external model-based judging, including MM-Vet, HallusionBench and LLaVA-Bench-in-the-wild, are evaluated with the GPT-4 API.

\subsection{Inference Protocol}
\label{apx:inference_protocol}

We use greedy decoding as the default decoding strategy for \sexyname{} and the original MLLM unless a baseline requires sampling or beam search by design. Questions from each benchmark are formatted using the chat template of the evaluated MLLM. For binary VQA-style benchmarks such as POPE, MME, and VideoHallucer, we parse the model output into yes/no answers; for caption-style evaluation such as CHAIR, we use the standard object-matching protocol. For GPT-based evaluations, we use the official or benchmark-recommended evaluator when available and run all external model-based judging with the GPT-4 API.

\textbf{Baseline settings.}
We compare against training-free hallucination mitigation methods that intervene at decoding time. Our baseline reproduction follows MemVR~\cite{zoulook} in evaluation protocol and practice: whenever a method releases official code, we re-run it with its released scripts and paper-aligned hyper-parameters; when no implementation is available, we copy the numbers directly from the original paper's tables. For any model--benchmark pair that a baseline paper does not report, we leave the corresponding table entry blank rather than extrapolating. For DeCo~\cite{wang2025mllm} specifically, we \emph{manually reproduce} every dataset--model combination absent from its published evaluations.

\textbf{\sexyname{} settings.}
For \sexyname{} and the original MLLM, we use greedy decoding by default and keep the same prompt template, image/video preprocessing, maximum generation length, and answer parsing within each benchmark. During \sexyname{} decoding, the model is run once per generation step with attentions and hidden states returned, so that we can read both the final prediction and the intermediate-layer predictions without a second forward pass.

The algorithm has three parameterized components. First, for \textbf{layer selection}, we compute high-frequency energy on the visual-attention slice of candidate decoder layers. In the main setting, we use the vision-token energy mode and the standard 2-D discrete Laplacian operator with kernel $\left[\begin{smallmatrix}0&1&0\\1&-4&1\\0&1&0\end{smallmatrix}\right]$. When the visual tokens cannot be reshaped into a square grid, we use a 1-D second-order difference kernel $[1,-2,1]$ as the fallback. For all the MLLMs, the candidate layer range is set within $[8,29)$. Sobel, Laplacian-of-Gaussian, and text-token energy variants are used only for ablation.

Second, for \textbf{candidate-token filtering}, we restrict logit remapping to a plausible token set selected from the final-layer distribution. Unless otherwise stated, we use top-$k{=}10$ and cumulative probability threshold $p{=}0.9$. These thresholds prevent the contrastive term from promoting tokens that the final layer already assigns very low probability.

Third, for \textbf{logit remapping}, $\alpha$ controls the strength of suppressing the high-energy layer's preference. We set $\alpha=0.1$ in the main experiments. We set $\beta=0.2$ for HallusionBench and 0.6 for CHAIR evaluation. All parameters are fixed for full evaluation.

\subsection{Backbone Adaptation and Compatibility}
\label{apx:backbone_compatibility}

\sexyname{} requires two kinds of intermediate signals from the backbone: the last-query attention over visual tokens and the LM-head logits from selected intermediate layers. Different MLLM implementations expose these signals differently, so we use lightweight model-specific adapters while keeping the original image/video encoder, prompt template, and tokenizer unchanged.

\textbf{LLaVA-Series.}
For LLaVA-1.5, LLaVA-Next-Video-DPO and Video-LLaVA, we insert the same image/video placeholder tokens used by the original model and force the attention implementation to an eager mode so that attention maps can be captured during generation. The adapter records the expanded multimodal sequence length after visual features are inserted and uses the placeholder position to align the visual-token span in the KV sequence. For video models, the visual tokens are produced from frame-level features and remain contiguous in the multimodal sequence; Video-LLaVA explicitly uses one image placeholder per sampled frame, while LLaVA-Next-Video uses the model's multimodal preparation routine to expand the video input. Intermediate-layer logits are obtained from the same forward pass through a forward-capture hook, avoiding an additional decoding pass.

\textbf{GLM-4V.}
GLM-4V does not expose attention in the same format as LLaVA-style models. We therefore keep its original core-attention forward path unchanged and recompute the attention probabilities in a side channel for the monitored layers. This design avoids changing the hidden-state trajectory when the intervention strength is zero. The visual-token span is located using the begin-of-image marker when available; otherwise, we infer the number of inserted visual tokens from the KV length after multimodal expansion and use a conservative fallback start position. Intermediate logits are obtained by applying the model's output layer to hooked block hidden states, with the final layer normalization used when required by the backbone.

\textbf{Qwen-VL.}
For Qwen-VL, we use the model's returned self-attention tensors and register hooks on the selected transformer blocks to obtain intermediate hidden states. The visual-token span is identified from the tokenizer's image start/end markers, matching the model's convention that image embeddings replace the tokens between these markers. Intermediate logits are computed by applying the final normalization and output embedding head to the hooked hidden states.

\textbf{Grid compatibility of the Laplacian energy.}
Our default high-frequency measure uses the Frobenius norm of the 2-D discrete Laplacian response on a reshaped visual-attention map. This operation is meaningful only when the selected visual tokens preserve a spatial grid order. LLaVA-style image encoders and video encoders generally emit patch features in raster order, and spatial pooling in LLaVA-Next keeps a regular local neighborhood structure for each frame. However, any-resolution padding/unpadding, frame concatenation, and model-specific visual resampling can make the full visual-token sequence non-square or not strictly aligned with a single 2-D semantic layout.

\subsection{Reproducibility Details}
\label{apx:reproducibility}

We fix random seeds for sampling, subset selection, and repeated evaluation when applicable. For CHAIR, we evaluate on same COCO Val2014 subsets. For POPE, we evaluate all three sampling protocols (random, popular, and adversarial). For MME and VideoHallucer, we use yes/no answer parsing and skip non-binary samples only in analyses where binary correctness is required. All compared methods use the same image or video inputs, prompts, and answer parsing rules under a given benchmark.

For analysis figures that inspect layer-wise behavior, we run the model with both hidden states and attentions returned. We compute per-layer logits by applying the LM head to normalized intermediate hidden states and compute high-frequency energy on the last-query visual-attention slice. These analysis runs are used only to diagnose layer behavior and produce figures; main benchmark evaluation uses the decoding protocol described above.

\subsection{Computational Resources}
\label{apx:compute}

All experiments are implemented in PyTorch~\cite{pytorch} and conducted on NVIDIA A800 80GB GPUs.

\subsection{Artifacts and Licenses}
\label{apx:artifacts}

We follow the licenses of all datasets, model checkpoints, and evaluation tools used in the experiments, and restrict their use to academic research when required. The evaluated datasets and benchmarks include MSCOCO, POPE, CHAIR, HallusionBench, VideoHallucer, MME, MM-Vet, and LLaVA-Bench-in-the-wild. The evaluated model families include LLaVA, Qwen-VL, GLM-4V, InstructBLIP, Video-LLaVA, and LLaVA-Next-Video, subject to the licenses of their released checkpoints and code. We will release code and instructions for reproducing the reported evaluations and analysis figures.

\section{Limitations}
\label{apx:limitations}

\sexyname{} relies on intermediate attention signals, which creates several practical limitations. First, the method assumes that the selected visual-token span preserves a meaningful local structure. In our main implementation, high-frequency energy is computed with a 2-D discrete Laplacian when the visual tokens can be reshaped into a square grid, and with a 1-D second-order difference otherwise. This design works naturally for patch-based vision encoders whose token order follows the image grid, but the interpretation can become less direct for models that use aggressive token pooling, learned resamplers, token pruning, or video-specific mixing across frames. In these cases, the energy still measures local variation along the model-provided token order, but it may not perfectly correspond to semantic spatial inconsistency in the original visual input. Second, the backbone must provide accessible self-attention maps and intermediate-layer logits; models or serving systems that hide these internal states cannot directly support our current implementation. Third, our evaluation focuses on hallucination and general multimodal benchmarks, and we have not systematically tested whether the same intervention improves or affects reasoning-intensive tasks. Finally, although we cover several representative image and video MLLMs, we have not evaluated substantially larger-scale models, so the behavior of \sexyname{} under stronger scaling regimes remains an open question.

\begin{table*}[t]
\centering
\caption{Comparison of decoding-time hallucination mitigation methods.}
\label{tab:method_positioning}
\resizebox{\linewidth}{!}{
\begin{tabular}{lcccc>{\centering\arraybackslash}p{2.6cm}}
\toprule
Methods
& \begin{tabular}[c]{@{}c@{}}No Second\\Input View\end{tabular}
& \begin{tabular}[c]{@{}c@{}}Hallucination\\Signal\end{tabular}
& \begin{tabular}[c]{@{}c@{}}Visual-Structure\\Aware\end{tabular}
& \begin{tabular}[c]{@{}c@{}}Modified\\Component(s)\end{tabular}
& \begin{tabular}[c]{@{}c@{}}Decoding\\Paradigm\end{tabular} \\
\midrule
DoLa~\cite{chuang2023dola}
& \cmark & \cmark & \xmark & Logits & Contrast \\
OPERA~\cite{opera2024cvpr}
& \cmark & \xmark & \xmark & Attention matrix & Att-intervention \\
PAI~\cite{liu2024paying}
& \xmark & \xmark & \xmark & Attention matrix & Att-intervention \\
VCD~\cite{vcd2024cvpr}
& \xmark & \xmark & \xmark & Visual input, logits & Contrast \\
ICD~\cite{wang2024mitigating}
& \xmark & \xmark & \xmark & Textual input, logits & Contrast \\
SID~\cite{huo2024self}
& \xmark & \xmark & \xmark & Textual input, logits & Contrast \\
DeCo~\cite{wang2025mllm}
& \cmark & \xmark & \xmark & Logits & Contrast \\
MemVR~\cite{zoulook}
& \cmark & \cmark & \xmark & Hidden states & Visual-retracing \\
AIR~\cite{zhu2026look}
& \cmark & \cmark & \xmark & Hidden states & Visual-retracing \\
DMAS~\cite{yin2026dynamic}
& \cmark & \xmark & \xmark & Attention matrix & Att-intervention \\
\cellcolor{lightgray}\textbf{\sexyname{} (ours)}
& \cellcolor{lightgray}\cmark
& \cellcolor{lightgray}\cmark
& \cellcolor{lightgray}\cmark
& \cellcolor{lightgray}Logits
& \cellcolor{lightgray}\textbf{Spectral-contrast} \\
\bottomrule
\end{tabular}
}
\end{table*}

\section{Case Study}
\label{apx:case_study}
In this section, we present more cases across to support the empirical findings in Sec.~\ref{sec:finding1} and Sec.~\ref{sec:finding2}.

\begin{figure}[ht]
    \centering
    \includegraphics[width=1.0\linewidth]{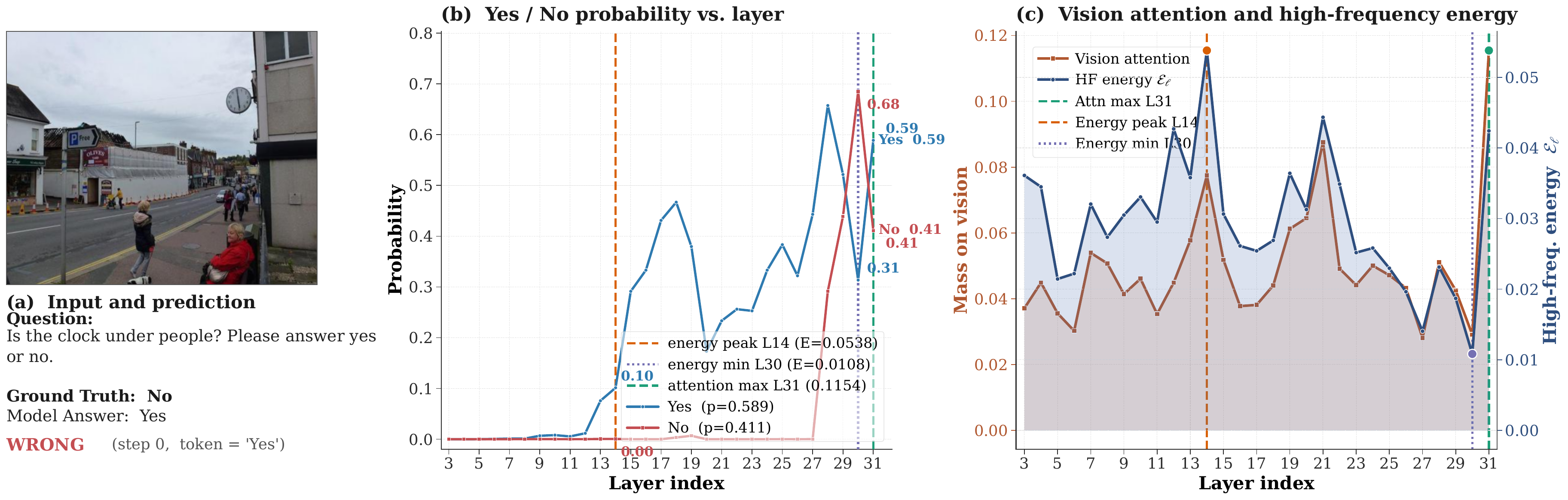}
    \caption{A case study of hallucination during decoding progress.}
    \label{fig:vis1}
\end{figure}

\begin{figure}[ht]
    \centering
    \includegraphics[width=1.0\linewidth]{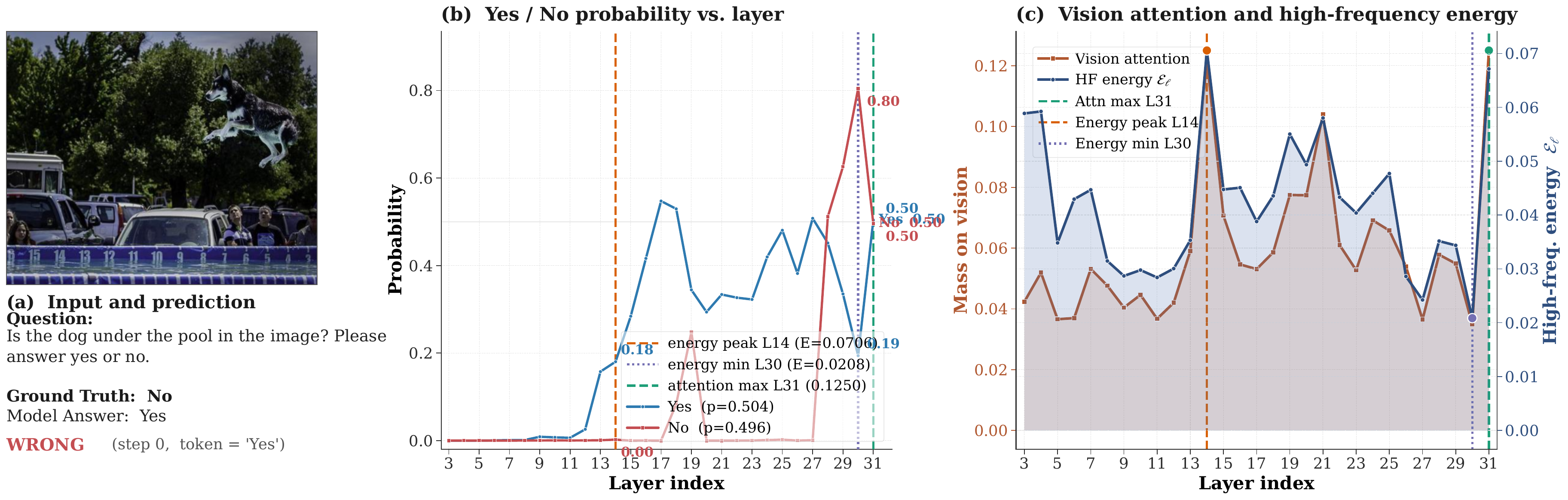}
    \caption{A case study of hallucination during decoding progress.}
    \label{fig:vis2}
\end{figure}

\begin{figure}[ht]
    \centering
    \includegraphics[width=1.0\linewidth]{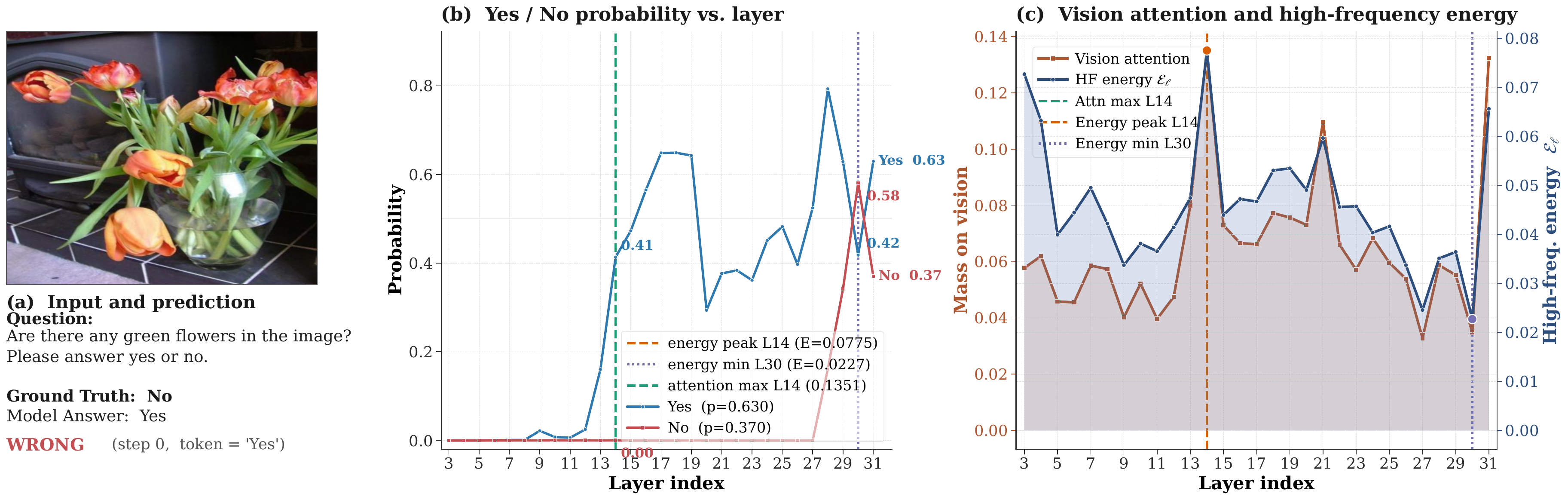}
    \caption{A case study of hallucination during decoding progress.}
    \label{fig:vis3}
\end{figure}

\begin{figure}[ht]
    \centering
    \includegraphics[width=1.0\linewidth]{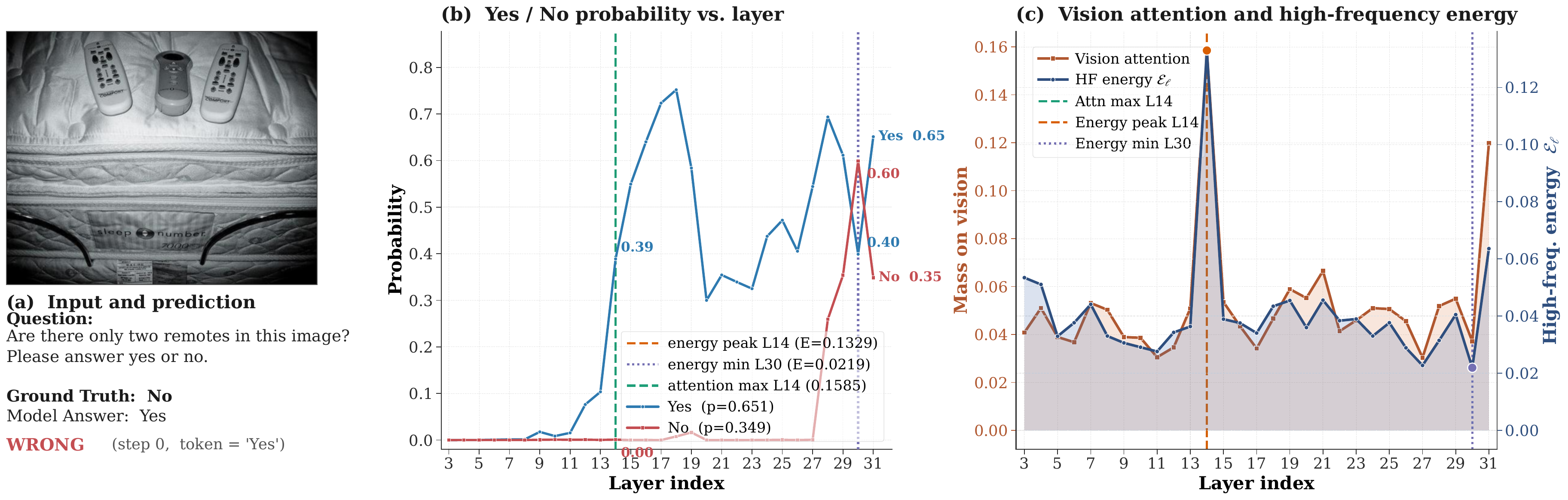}
    \caption{A case study of hallucination during decoding progress.}
    \label{fig:vis4}
\end{figure}

\begin{figure}[ht]
    \centering
    \includegraphics[width=1.0\linewidth]{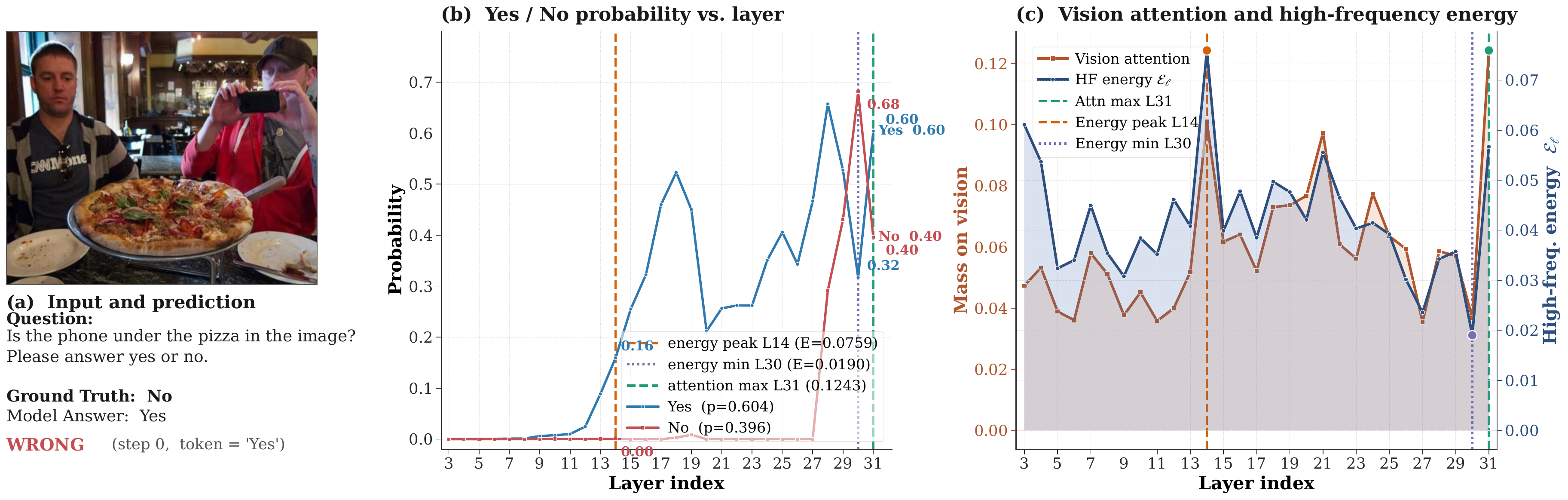}
    \caption{A case study of hallucination during decoding progress.}
    \label{fig:vis5}
\end{figure}

\begin{figure}[ht]
    \centering
    \includegraphics[width=1.0\linewidth]{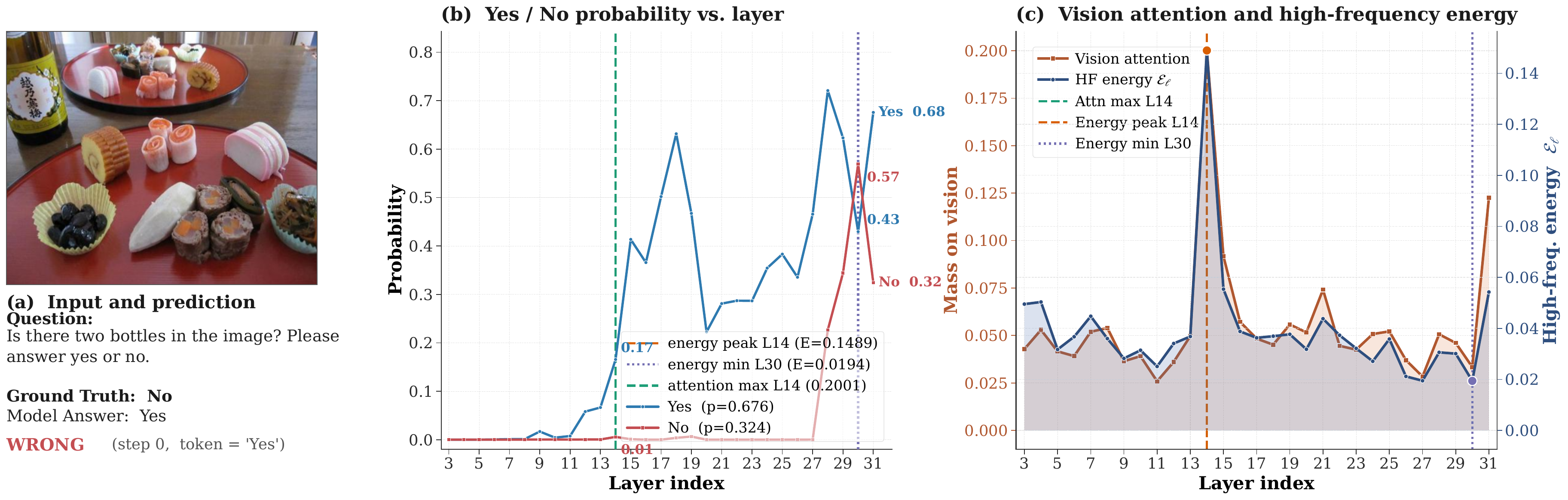}
    \caption{A case study of hallucination during decoding progress.}
    \label{fig:vis6}
\end{figure}

\begin{figure}[ht]
    \centering
    \includegraphics[width=1.0\linewidth]{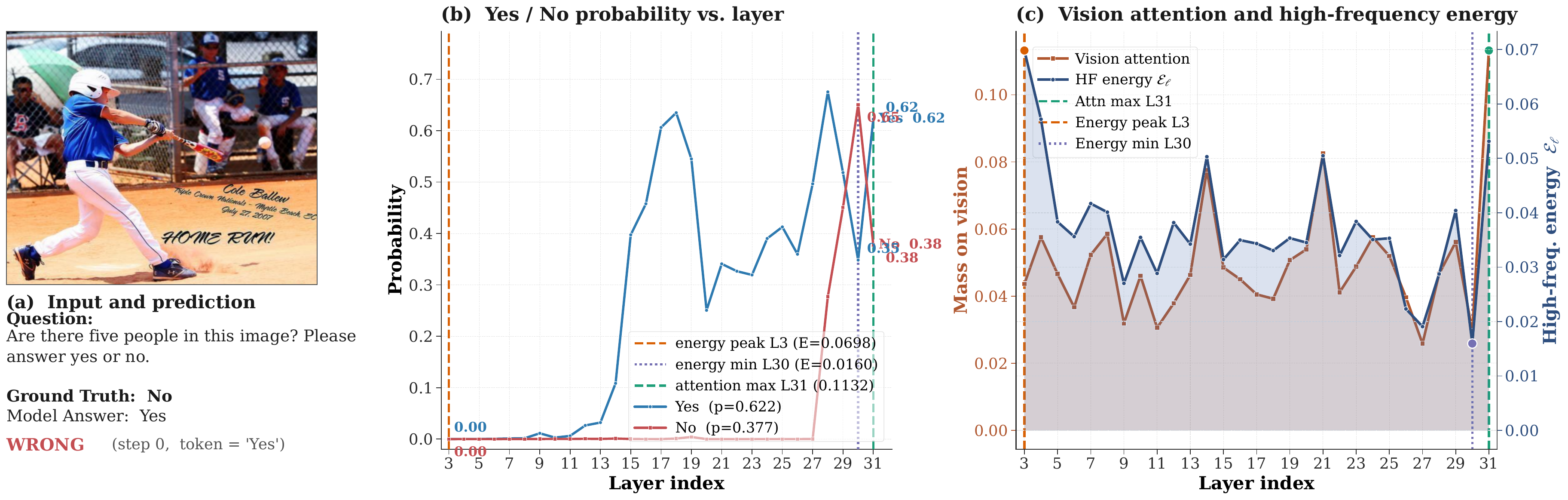}
    \caption{A case study of hallucination during decoding progress.}
    \label{fig:vis7}
\end{figure}

%% file: references.bib
@misc{liu2024llavanext,
  title={Llavanext: Improved reasoning, ocr, and world knowledge},
  author={Liu, Haotian and Li, Chunyuan and Li, Yuheng and Li, Bo and Zhang, Yuanhan and Shen, Sheng and Lee, Yong Jae},
  year={2024}
}

@article{liu2023improved,
  title={Improved baselines with visual instruction tuning},
  author={Liu, Haotian and Li, Chunyuan and Li, Yuheng and Lee, Yong Jae},
  journal={arXiv preprint arXiv:2310.03744},
  year={2023}
}

@article{glm2024chatglm,
  title={Chatglm: A family of large language models from glm-130b to glm-4 all tools},
  author={Glm, Team and Zeng, Aohan and Xu, Bin and Wang, Bowen and Zhang, Chenhui and Yin, Da and Zhang, Dan and Rojas, Diego and Feng, Guanyu and Zhao, Hanlin and others},
  journal={arXiv preprint arXiv:2406.12793},
  year={2024}
}

@article{xing2024mitigating,
  title={Mitigating object hallucination via concentric causal attention},
  author={Xing, Yun and Li, Yiheng and Laptev, Ivan and Lu, Shijian},
  journal={arXiv preprint arXiv:2410.15926},
  year={2024}
}

@article{neo2024vord,
  title={VORD: Visual Ordinal Calibration for Mitigating Object Hallucinations in Large Vision-Language Models},
  author={Neo, Dexter and Chen, Tsuhan},
  journal={arXiv preprint arXiv:2412.15739},
  year={2024}
}

@article{qu2024alleviating,
  title={Alleviating hallucination in large vision-language models with active retrieval augmentation},
  author={Qu, Xiaoye and Chen, Qiyuan and Wei, Wei and Sun, Jishuo and Dong, Jianfeng},
  journal={arXiv preprint arXiv:2408.00555},
  year={2024}
}

@article{zhang2024seeing,
  title={Seeing clearly by layer two: Enhancing attention heads to alleviate hallucination in lvlms},
  author={Zhang, Xiaofeng and Quan, Yihao and Gu, Chaochen and Shen, Chen and Yuan, Xiaosong and Yan, Shaotian and Cheng, Hao and Wu, Kaijie and Ye, Jieping},
  journal={arXiv preprint arXiv:2411.09968},
  year={2024}
}

@article{wang2024mitigating,
  title={Mitigating hallucinations in large vision-language models with instruction contrastive decoding},
  author={Wang, Xintong and Pan, Jingheng and Ding, Liang and Biemann, Chris},
  journal={arXiv preprint arXiv:2403.18715},
  year={2024}
}

@article{huang2024visualhall,
  title={Visual hallucinations of multi-modal large language models},
  author={Huang, Wen and Liu, Hongbin and Guo, Minxin and Gong, Neil Zhenqiang},
  journal={arXiv preprint arXiv:2402.14683},
  year={2024}
}

@inproceedings{yu2024rlhf,
  title={Rlhf-v: Towards trustworthy mllms via behavior alignment from fine-grained correctional human feedback},
  author={Yu, Tianyu and Yao, Yuan and Zhang, Haoye and He, Taiwen and Han, Yifeng and Cui, Ganqu and Hu, Jinyi and Liu, Zhiyuan and Zheng, Hai-Tao and Sun, Maosong and others},
  booktitle={Proceedings of the IEEE/CVF Conference on Computer Vision and Pattern Recognition},
  pages={13807--13816},
  year={2024}
}

@article{lin2024hassurvey,
  title={Has Multimodal Learning Delivered Universal Intelligence in Healthcare? A Comprehensive Survey},
  author={Lin, Qika and Zhu, Yifan and Mei, Xin and Huang, Ling and Ma, Jingying and He, Kai and Peng, Zhen and Cambria, Erik and Feng, Mengling},
  journal={arXiv preprint arXiv:2408.12880},
  year={2024}
}

@inproceedings{ding2024holistic,
  title={Holistic Autonomous Driving Understanding by Bird's-Eye-View Injected Multi-Modal Large Models},
  author={Ding, Xinpeng and Han, Jianhua and Xu, Hang and Liang, Xiaodan and Zhang, Wei and Li, Xiaomeng},
  booktitle={Proceedings of the IEEE/CVF Conference on Computer Vision and Pattern Recognition},
  pages={13668--13677},
  year={2024}
}

@inproceedings{yang2024rag,
  title={IM-RAG: Multi-Round Retrieval-Augmented Generation Through Learning Inner Monologues},
  author={Yang, Diji and Rao, Jinmeng and Chen, Kezhen and Guo, Xiaoyuan and Zhang, Yawen and Yang, Jie and Zhang, Yi},
  booktitle={Proceedings of the 47th International ACM SIGIR Conference on Research and Development in Information Retrieval},
  pages={730--740},
  year={2024}
}

@inproceedings{
    liu2024mitigating,
    title={Mitigating Hallucination in Large Multi-Modal Models via Robust Instruction Tuning},
    author={Fuxiao Liu and Kevin Lin and Linjie Li and Jianfeng Wang and Yaser Yacoob and Lijuan Wang},
    booktitle={The Twelfth International Conference on Learning Representations},
    year={2024},
    url={https://openreview.net/forum?id=J44HfH4JCg}
}

@inproceedings{li2023contrastive,
  title={Contrastive Decoding: Open-ended Text Generation as Optimization},
  author={Li, Xiang Lisa and Holtzman, Ari and Fried, Daniel and Liang, Percy and Eisner, Jason and Hashimoto, Tatsunori and Zettlemoyer, Luke and Lewis, Mike},
  booktitle={The 61st Annual Meeting Of The Association For Computational Linguistics},
  year={2023}
}

@inproceedings{shi2024trusting,
  title={Trusting Your Evidence: Hallucinate Less with Context-aware Decoding},
  author={Shi, Weijia and Han, Xiaochuang and Lewis, Mike and Tsvetkov, Yulia and Zettlemoyer, Luke and Yih, Wen-tau},
  booktitle={Proceedings of the 2024 Conference of the North American Chapter of the Association for Computational Linguistics: Human Language Technologies (Volume 2: Short Papers)},
  pages={783--791},
  year={2024}
}

@article{alayrac2022flamingo,
  title={Flamingo: a visual language model for few-shot learning},
  author={Alayrac, Jean-Baptiste and Donahue, Jeff and Luc, Pauline and Miech, Antoine and Barr, Iain and Hasson, Yana and Lenc, Karel and Mensch, Arthur and Millican, Katherine and Reynolds, Malcolm and others},
  journal={Advances in Neural Information Processing Systems},
  volume={35},
  pages={23716--23736},
  year={2022}
}

@inproceedings{rohrbach2018object,
  title={Object Hallucination in Image Captioning},
  author={Rohrbach, Anna and Hendricks, Lisa Anne and Burns, Kaylee and Darrell, Trevor and Saenko, Kate},
  booktitle={Proceedings of the 2018 Conference on Empirical Methods in Natural Language Processing},
  pages={4035--4045},
  year={2018}
}

@inproceedings{gunjal2024detecting,
  title={Detecting and preventing hallucinations in large vision language models},
  author={Gunjal, Anisha and Yin, Jihan and Bas, Erhan},
  booktitle={Proceedings of the AAAI Conference on Artificial Intelligence},
  volume={38},
  pages={18135--18143},
  year={2024}
}

@inproceedings{zhouanalyzing,
  title={Analyzing and Mitigating Object Hallucination in Large Vision-Language Models},
  author={Zhou, Yiyang and Cui, Chenhang and Yoon, Jaehong and Zhang, Linjun and Deng, Zhun and Finn, Chelsea and Bansal, Mohit and Yao, Huaxiu},
  booktitle={International Conference on Learning Representations},
  year={2024}
}

@inproceedings{chenhalc,
  title={HALC: Object Hallucination Reduction via Adaptive Focal-Contrast Decoding},
  author={Chen, Zhaorun and Zhao, Zhuokai and Luo, Hongyin and Yao, Huaxiu and Li, Bo and Zhou, Jiawei},
  booktitle={Forty-first International Conference on Machine Learning},
  year={2024}
}

@inproceedings{huo2024self,
  title={Self-introspective decoding: Alleviating hallucinations for large vision-language models},
  author={Huo, Fushuo and Xu, Wenchao and Zhang, Zhong and Wang, Haozhao and Chen, Zhicheng and Zhao, Peilin},
  booktitle={ICLR},
  year={2024}
}

@inproceedings{liu2024paying,
  title={Paying more attention to image: A training-free method for alleviating hallucination in lvlms},
  author={Liu, Shi and Zheng, Kecheng and Chen, Wei},
  booktitle={European Conference on Computer Vision},
  pages={125--140},
  year={2024},
  organization={Springer}
}

@inproceedings{li2023blip,
  title={Blip-2: Bootstrapping language-image pre-training with frozen image encoders and large language models},
  author={Li, Junnan and Li, Dongxu and Savarese, Silvio and Hoi, Steven},
  booktitle={International Conference on Machine Learning},
  pages={19730--19742},
  year={2023},
  organization={PMLR}
}

@inproceedings{lin2014microsoft,
  title={Microsoft coco: Common objects in context},
  author={Lin, Tsung-Yi and Maire, Michael and Belongie, Serge and Hays, James and Perona, Pietro and Ramanan, Deva and Doll{\'a}r, Piotr and Zitnick, C Lawrence},
  booktitle={Proceedings of the European Conference on Computer Vision (ECCV)},
  year={2014},
}

@inproceedings{li2023evaluating,
  title={Evaluating Object Hallucination in Large Vision-Language Models},
  author={Yifan Li and Yifan Du and Kun Zhou and Jinpeng Wang and Xin Zhao and Ji-Rong Wen},
  booktitle={The 2023 Conference on Empirical Methods in Natural Language Processing},
  year={2023}
}

@article{fu2023mme,
  title={MME: A Comprehensive Evaluation Benchmark for Multimodal Large Language Models},
  author={Fu, Chaoyou and Chen, Peixian and Shen, Yunhang and Qin, Yulei and Zhang, Mengdan and Lin, Xu and Yang, Jinrui and Zheng, Xiawu and Li, Ke and Sun, Xing and others},
  journal={arXiv preprint arXiv:2306.13394},
  year={2023}
}

@inproceedings{2024MMVet,
  title={MM-Vet: Evaluating Large Multimodal Models for Integrated Capabilities},
  author={Yu, Weihao and Yang, Zhengyuan and Li, Linjie and Wang, Jianfeng and Lin, Kevin and Liu, Zicheng and Wang, Xinchao and Wang, Lijuan},
  booktitle={Forty-first International Conference on Machine Learning},
  year={2024}
}

@inproceedings{guan2024hallusionbench,
  title={Hallusionbench: an advanced diagnostic suite for entangled language hallucination and visual illusion in large vision-language models},
  author={Guan, Tianrui and Liu, Fuxiao and Wu, Xiyang and Xian, Ruiqi and Li, Zongxia and Liu, Xiaoyu and Wang, Xijun and Chen, Lichang and Huang, Furong and Yacoob, Yaser and others},
  booktitle={Proceedings of the IEEE/CVF Conference on Computer Vision and Pattern Recognition},
  pages={14375--14385},
  year={2024}
}

@article{lyu2025realrag,
  title={RealRAG: Retrieval-augmented Realistic Image Generation via Self-reflective Contrastive Learning},
  author={Lyu, Yuanhuiyi and Zheng, Xu and Jiang, Lutao and Yan, Yibo and Zou, Xin and Zhou, Huiyu and Zhang, Linfeng and Hu, Xuming},
  journal={arXiv preprint arXiv:2502.00848},
  year={2025}
}

@article{zheng2024reefknot,
  title={Reefknot: A Comprehensive Benchmark for Relation Hallucination Evaluation, Analysis and Mitigation in Multimodal Large Language Models},
  author={Zheng, Kening and Chen, Junkai and Yan, Yibo and Zou, Xin and Hu, Xuming},
  journal={arXiv preprint arXiv:2408.09429},
  year={2024}
}

@article{liu2024visual,
  title={Visual instruction tuning},
  author={Liu, Haotian and Li, Chunyuan and Wu, Qingyang and Lee, Yong Jae},
  journal={Advances in Neural Information Processing Systems},
  volume={36},
  year={2024}
}

@article{bai2023qwen,
  title={Qwen-vl: A frontier large vision-language model with versatile abilities},
  author={Bai, Jinze and Bai, Shuai and Yang, Shusheng and Wang, Shijie and Tan, Sinan and Wang, Peng and Lin, Junyang and Zhou, Chang and Zhou, Jingren},
  journal={arXiv preprint arXiv:2308.12966},
  year={2023}
}

@inproceedings{chuang2023dola,
  title={DoLa: Decoding by Contrasting Layers Improves Factuality in Large Language Models},
  author={Chuang, Yung-Sung and Xie, Yujia and Luo, Hongyin and Kim, Yoon and Glass, James R and He, Pengcheng},
  booktitle={The Twelfth International Conference on Learning Representations},
  year={2023}
}

@inproceedings{vcd2024cvpr,
  title={Mitigating object hallucinations in large vision-language models through visual contrastive decoding},
  author={Leng, Sicong and Zhang, Hang and Chen, Guanzheng and Li, Xin and Lu, Shijian and Miao, Chunyan and Bing, Lidong},
  booktitle={Proceedings of the IEEE/CVF Conference on Computer Vision and Pattern Recognition},
  pages={13872--13882},
  year={2024}
}

@inproceedings{opera2024cvpr,
  title={Opera: Alleviating hallucination in multi-modal large language models via over-trust penalty and retrospection-allocation},
  author={Huang, Qidong and Dong, Xiaoyi and Zhang, Pan and Wang, Bin and He, Conghui and Wang, Jiaqi and Lin, Dahua and Zhang, Weiming and Yu, Nenghai},
  booktitle={Proceedings of the IEEE/CVF Conference on Computer Vision and Pattern Recognition},
  pages={13418--13427},
  year={2024}
}

@article{bai2024hallucination,
  title={Hallucination of multimodal large language models: A survey},
  author={Bai, Zechen and Wang, Pichao and Xiao, Tianjun and He, Tong and Han, Zongbo and Zhang, Zheng and Shou, Mike Zheng},
  journal={arXiv preprint arXiv:2404.18930},
  year={2024}
}

@article{yin2023survey,
  title={A survey on multimodal large language models},
  author={Yin, Shukang and Fu, Chaoyou and Zhao, Sirui and Li, Ke and Sun, Xing and Xu, Tong and Chen, Enhong},
  journal={arXiv preprint arXiv:2306.13549},
  year={2023}
}

@article{zoulook,
  title={Look Twice Before You Answer: Memory-Space Visual Retracing for Hallucination Mitigation in Multimodal Large Language Models}, 
  author={Zou, Xin and Wang, Yizhou and Yan, Yibo and Lyu, Yuanhuiyi and Zheng, Kening and Huang, Sirui and Chen, Junkai and Jiang, Peijie and Liu, Jia and Tang, Chang and Hu, Xuming},
  journal={Forty-second International Conference on Machine Learning (ICML)},
  year={2025}
}

@inproceedings{
wang2025mllm,
title={{MLLM} can see? Dynamic Correction Decoding for Hallucination Mitigation},
author={Chenxi Wang and Xiang Chen and Ningyu Zhang and Bozhong Tian and Haoming Xu and Shumin Deng and Huajun Chen},
booktitle={The Thirteenth International Conference on Learning Representations},
year={2025},
url={https://openreview.net/forum?id=4z3IguA4Zg}
}

@article{videohallucer,
    title={VideoHallucer: Evaluating Intrinsic and Extrinsic Hallucinations in Large Video-Language Models},
    author={Wang, Yuxuan and Wang, Yueqian and Zhao, Dongyan and Xie, Cihang and Zheng, Zilong},
    journal={arxiv},
    year={2024}
}

@inproceedings{lin-etal-2024-video,
    title = "Video-{LL}a{VA}: Learning United Visual Representation by Alignment Before Projection",
    author = "Lin, Bin  and
      Ye, Yang  and
      Zhu, Bin  and
      Cui, Jiaxi  and
      Ning, Munan  and
      Jin, Peng  and
      Yuan, Li",
    editor = "Al-Onaizan, Yaser  and
      Bansal, Mohit  and
      Chen, Yun-Nung",
    booktitle = "Proceedings of the 2024 Conference on Empirical Methods in Natural Language Processing",
    month = nov,
    year = "2024",
    address = "Miami, Florida, USA",
    publisher = "Association for Computational Linguistics",
    url = "https://aclanthology.org/2024.emnlp-main.342/",
    doi = "10.18653/v1/2024.emnlp-main.342",
    pages = "5971--5984",
}

@article{qi2026detecting,
  title={Detecting Contextual Hallucinations in LLMs with Frequency-Aware Attention},
  author={Qi, Siya and Chen, Yudong and Zhao, Runcong and Zhu, Qinglin and Hu, Zhanghao and Liu, Wei and He, Yulan and Yuan, Zheng and Gui, Lin},
  journal={arXiv preprint arXiv:2602.18145},
  year={2026}
}

@inproceedings{
zhu2026look,
title={Look Carefully: Adaptive Visual Reinforcements in Multimodal Large Language Models for Hallucination Mitigation},
author={Xingyu Zhu and Kesen Zhao and Liang Yi and Shuo Wang and Zhicai Wang and Beier Zhu and Hanwang Zhang and Xiangnan He},
booktitle={The Fourteenth International Conference on Learning Representations},
year={2026},
url={https://openreview.net/forum?id=KhEjFE1QvQ}
}

@article{ren2026nolan,
  title={NoLan: Mitigating Object Hallucinations in Large Vision-Language Models via Dynamic Suppression of Language Priors},
  author={Ren, Lingfeng and Yu, Weihao and Yu, Runpeng and Wang, Xinchao},
  journal={arXiv preprint arXiv:2602.22144},
  year={2026}
}

@inproceedings{ASCD,
  title={ASCD: Attention-Steerable Contrastive Decoding for Reducing Hallucination in MLLM},
  author={Wang, Yujun and Bi, Jinhe and Pirk, Soren and Ma, Yunpu and others},
  booktitle={Proceedings of the AAAI Conference on Artificial Intelligence},
  volume={40},
  pages={10306--10314},
  year={2026}
}

@inproceedings{
yin2026dynamic,
title={Dynamic Multimodal Activation Steering for Hallucination Mitigation in Large Vision-Language Models},
author={Jianghao Yin and Qin Chen and Kedi Chen and Jie Zhou and Xingjiao Wu and Liang He},
booktitle={The Fourteenth International Conference on Learning Representations},
year={2026},
url={https://openreview.net/forum?id=YtWZdwEG5K}
}

@inproceedings{iTaD,
    title = "Mitigating Hallucinations in Multi-modal Large Language Models via Image Token Attention-Guided Decoding",
    author = "Xu, Xinhao  and
      Chen, Hui  and
      Lyu, Mengyao  and
      Zhao, Sicheng  and
      Xiong, Yizhe  and
      Lin, Zijia  and
      Han, Jungong  and
      Ding, Guiguang",
    editor = "Chiruzzo, Luis  and
      Ritter, Alan  and
      Wang, Lu",
    booktitle = "Proceedings of the 2025 Conference of the Nations of the Americas Chapter of the Association for Computational Linguistics: Human Language Technologies (Volume 1: Long Papers)",
    year = "2025",
    address = "Albuquerque, New Mexico",
    publisher = "Association for Computational Linguistics",
    url = "https://aclanthology.org/2025.naacl-long.75/",
    doi = "10.18653/v1/2025.naacl-long.75",
    pages = "1571--1590",
    ISBN = "979-8-89176-189-6",
}

@article{song2026seeing,
  title={Seeing Right but Saying Wrong: Inter-and Intra-Layer Refinement in MLLMs without Training},
  author={Song, Shezheng and Li, Shasha and Yu, Jie},
  journal={arXiv preprint arXiv:2601.07359},
  year={2026}
}

@article{pytorch,
  title={Pytorch: An imperative style, high-performance deep learning library},
  author={Paszke, Adam and Gross, Sam and Massa, Francisco and Lerer, Adam and Bradbury, James and Chanan, Gregory and Killeen, Trevor and Lin, Zeming and Gimelshein, Natalia and Antiga, Luca and others},
  journal={Advances in neural information processing systems},
  volume={32},
  year={2019}
}
